\theoremstyle{plain}
\newtheorem{theorem}{Theorem}[section]
\newtheorem{proposition}[theorem]{Proposition}
\newtheorem{lemma}[theorem]{Lemma}
\newtheorem{corollary}[theorem]{Corollary}
\theoremstyle{definition}
\theoremstyle{remark}
\newcommand{\modelname}{VDW-GNN}
\newcommand{\uval}[1]{%
  \underaccent{%
    \rule{\widthof{\text{0.0000 $\pm$ 0.0000}}}{1.1pt}%
  }{#1}%
}
\icmltitlerunning{{\modelname}s}
\begin{document}

\twocolumn[
  \icmltitle{{\modelname}s: Vector diffusion wavelets for geometric graph neural networks}



  \icmlsetsymbol{equal}{*}

  \begin{icmlauthorlist}
    \icmlauthor{David R Johnson}{bsu-comp}
    \icmlauthor{Alexander Sietsema}{ucla}
    \icmlauthor{Rishabh Anand}{yale-cs}
    \icmlauthor{Deanna Needell}{ucla}
    \icmlauthor{Smita Krishnaswamy}{yale-cs,yale-gen}
    \icmlauthor{Michael Perlmutter}{bsu-comp,bsu-math}
  \end{icmlauthorlist}

  \icmlaffiliation{bsu-comp}{Program in Computing, Boise State University, Boise, Idaho, USA}
  \icmlaffiliation{ucla}{Department of Mathematics, UCLA, Los Angeles, CA, USA}
  \icmlaffiliation{yale-cs}{Department of Computer Science, Yale University, New Haven, CT, USA}
  \icmlaffiliation{yale-gen}{Department of Genetics, Yale University, New Haven, CT, USA}
  \icmlaffiliation{bsu-math}{Department of Mathematics, Boise State University, Boise, Idaho, USA}

  \icmlcorrespondingauthor{Michael Perlmutter}{mperlmutter@boisestate.edu}

  \icmlkeywords{Machine Learning, ICML}

  \vskip 0.3in
]



\printAffiliationsAndNotice{}  

\begin{abstract}
We introduce vector diffusion wavelets (VDWs), a novel family of wavelets inspired by the vector diffusion maps algorithm that was introduced to analyze data lying in the tangent bundle of a Riemannian manifold. We show that these wavelets may be effectively incorporated into a family of geometric graph neural networks, which we refer to as VDW-GNNs. We demonstrate that such networks are effective on synthetic point cloud data, as well as on real-world data derived from wind field and neural activity measurements. Theoretically, we prove that these new wavelets have desirable frame theoretic properties, similar to traditional diffusion wavelets. Additionally, we prove that these wavelets have useful symmetries with respect to rotations and translations.
\end{abstract}

\section{Introduction}
The field of Geometric Deep Learning (GDL) \citep{bronstein2017geometric,bronstein2021geometric} aims to extend the success of deep learning to data sets with geometric structure such as graphs and manifolds. Crucially, most GDL methods aim to represent the data points in a manner that respects the intrinsic structure {and symmetries of the data \citep{borde2025mathematical}}. Furthermore, they aim to represent two data points the same way if they differ only by an uninformative deformation, such as the relabeling of the vertices of a graph or a global isometry of a manifold \citep{li2025completeness, wang2021stabilityrel, perlmutter2020geometric}. \emph{In this paper, we focus on geometric graphs where the vertices lie in Euclidean space $\mathbb{R}^D$, with both scalar- and vector-valued node features.}

In some cases, we assume that the vertices lie upon an unknown $d$-dimensional manifold $\mathcal{M}$ with $d<D$. In these cases, we will generally assume that the  features take (vector) values in the tangent bundle $\mathcal{T}\mathcal{M}$, and our method will aim to leverage this intrinsic low-dimensional structure. Additionally, when appropriate, we will show that our method has desirable symmetries with respect to  rigid motions in the ambient space $\mathbb{R}^D$, following the lead of equivariant GNNs  \citep{satorras2021n,han2022geometrically,duval2023hitchhiker}.

Most common GNNs are based on the \emph{message-passing} paradigm in which each node is represented in a manner informed by its immediate neighbors, often by local averaging \citep{kipf2016semi,velivckovic2017graph,xu2018how,khemani2024review}. This promotes similar representations for adjacent vertices, which is a useful heuristic for, e.g., node-classification tasks on homophilous citation networks.
However, after each layer of a simple message passing network, the representation of each node becomes progressively smoother. Therefore, in such networks, the total number of layers must be kept small, typically to two or three, to avoid oversmoothing \citep{nt2019revisiting, qureshi2023limits}, where the node representations become increasingly similar, thus limiting their utility for machine learning tasks 
unless additional mechanisms or connections are added to the network.
On the other hand, since each message-passing layer acts locally, this {may create} a new problem: \emph{underreaching} \citep{lu2024nodemixup}, in which the network struggles with tasks requiring the utilization of long-range interactions.

Several possible solutions to the oversmoothing versus underreaching tradeoff have been proposed in the literature, including the use of residual connections \citep{he2016deep} and jumping knowledge mechanisms \citep{xu2018representation}. In this paper, we will focus on diffusion wavelets \citep{coifman:diffWavelets2006} and  \emph{geometric scattering transforms} \citep{zou:graphCNNScat2018,gama:diffScatGraphs2018,gao:graphScat2018} as well as associated GNNs \citep{min2022can,min2021gsan,tong2024learnable,bhaskar2021molecular,xu2023blis,viswanath2024protscape}. 
The use of diffusion wavelets allows such networks to capture  multiscale graph geometry in a single layer while avoiding oversmoothing \citep{wenkel2022overcoming}.
In this work, we extend diffusion wavelets and associated neural networks to the setting of vector-valued features. Our primary contributions are:
 \begin{itemize}
     \item We
     introduce \emph{vector diffusion wavelets} (VDWs) for graphs with vector-valued node features.
     \item We prove that these wavelets have similar frame bounds to traditional diffusion wavelets and that they are equivariant with respect to rotations in the ambient space.
     \item We show that VDWs may be effectively incorporated into {\modelname}s, and demonstrate the utility of {\modelname}s on synthetic and real-world data sets drawn from wind fields and neural-activity manifolds.
 \end{itemize}



\section{Background}\label{sec: background}

We  let $G=(V, E, w)$ be a weighted, undirected, connected graph with vertices (nodes) $V = \{v_1,\cdots,v_n\}$, edges $E \subseteq {V \choose 2} = \{ \{v_i,v_j\}: i \neq j \}$, and weighting function $w:E\rightarrow\mathbb{R}$. 
We let   $\mathbf{A} \in \mathbb{R}^{n \times n}$ denote the weighted adjacency matrix. For most of this work, we will focus on geometric graphs, where the vertices $v_1,\dots,v_n$ lie in $\mathbb{R}^D$, and we  will denote the $i$-th coordinate of the vertex $v_j$ by $v_j[i]$.

Adopting the perspective of \emph{graph signal processing} (GSP) \citep{shuman:emerging2013, Ortega2018}, we interpret the features associated with each node as signals, i.e., functions defined on the $V$. We assume that we are given $F_{\text{scalar}}$ scalar-valued signals $\mathbf{x}_i:V\rightarrow\mathbb{R}$, $1\leq i\leq F_{\text{scalar}}$, and $F_{\text{vec}}$ vector-valued signals $\mathbf{w}_i:V\rightarrow\mathbb{R}^{d}$, $1\leq i\leq F_{\text{vec}}$.
When convenient, we will identify the signal $\mathbf{x}_i$ with the vector in $\mathbb{R}^n$ defined by $\mathbf{x}_i[j]=\mathbf{x}_i(v_j)$. Additionally, we may organize the scalar-values signals into an $n\times F_{\text{scalar}}$ matrix $\mathbf{X}=\mathbf{X}^{(0)}$, whose $i$-th column is $\mathbf{x}_i$ so that the row $\mathbf{X}[j,:]$ contains the features associated to $v_j$. Similarly, we may organize the vector-valued signals into an $n\times F_{\text{vec}}\times d$ tensor $\mathbf{W}=\mathbf{W}^{(0)}$.

\subsection{Diffusion Wavelets and Geometric Scattering}\label{sec: background wavelets}

The geometric scattering transform \citep{zou:graphCNNScat2018,gama:diffScatGraphs2018,gao:graphScat2018} provides an alternative to  message-passing, which, as shown in \citet{wenkel2022overcoming}, allows one to circumvent the oversmoothing-versus-underreaching tradeoff via the use of diffusion wavelets to extract multiscale information based on an earlier construction for Euclidean data \citep{mallat:scattering2012} (see also \citet{mallat:firstScat2010,oyallon:scalingScattering2017,andreux:kymatio2018,wiatowski:frameScat2015,wiatowski:mathTheoryCNN2018,eickenberg:scatMoleculesJCP2018}).

\emph{Diffusion wavelets} \citep{coifman:diffWavelets2006} are constructed via the diffusion operator
\begin{equation}\label{eqn: P definition} \mathbf{P} = \frac{1}{2}(\mathbf{I} + \mathbf{D}^{-1}\mathbf{A}),
\end{equation} where $\mathbf{D} \in \mathbb{R}^{n \times n}$ is the diagonal degree matrix.\footnote{Various versions of the geometric scattering transform 
use different normalizations of the diffusion matrix $\mathbf{P}$.
All of our theory may be readily adapted to other normalizations following the analysis provided in \citet{perlmutter2019understanding}.} 
Letting $J$ be a positive integer, we define diffusion wavelets $\mathcal{W}_J = \{\mathbf{\Psi}_j\}_{j=0}^ J\cup\{\mathbf{\Phi}_J\}$ by 
\begin{align}\label{eqn: scalar wavelets}
    \mathbf{\Psi}_j &= \mathbf{P}^{2^{j-1}} - \mathbf{P}^{2^{j}} = \mathbf{P}^{2^{j-1}}(\mathbf{I} - \mathbf{P}^{2^{j-1}})
\end{align}
for $1\leq j\leq J$, and $\mathbf{\Psi}_0 = \mathbf{I} - \mathbf{P}$
and $\mathbf{\Phi}_J = \mathbf{P}^{2^J}$.
To understand these wavelets, observe that for a given signal $\mathbf{x}$, $\boldsymbol{\Psi}_1\mathbf{x}[i]
=
(\mathbf{P}-\mathbf{P}^2)\mathbf{x}[i]$, the $i$-th entry of $\boldsymbol{\Psi}_1\mathbf{x}$, can be interpreted as describing the differences in the behavior of a signal $\mathbf{x}$ with a one-hop neighborhood of $v_i$ to the behavior of $\mathbf{x}$ in a two-hop neighborhood. 
Similarly, $\boldsymbol{\Psi}_0\mathbf{x}=(\mathbf{I}-\mathbf{P})\mathbf{x}$ describes how the behavior of $\mathbf{x}$ at each vertex differs from at its immediate neighbors. Collectively, the filter bank  $\mathcal{W}_J = \{\mathbf{\Psi}_j\}_{j=0}^ J\cup\{\mathbf{\Phi}_J\}$  acts as a multi-scale feature extractor where $\boldsymbol{\Phi}_J$ extracts global information  from $\mathbf{x}$ (at scale $2^J$) and each $\boldsymbol{\Psi}_j$ tracks changes across different scales. From the GSP perspective   $\boldsymbol{\Phi}_J$, is interpreted as a low-pass filter and the $\boldsymbol{\Psi}_j$ are interpreted as  band-pass filters that highlight  different frequency bands. 

Given the bank of diffusion wavelets $\mathcal{W}_J$, the geometric scattering transform is a non-linear multi-layer feature extractor. It defines \emph{scattering coefficients} via alternating sequences of linear maps (chosen to be wavelets) and entrywise activations, analogous to a neural network. Formally, first- and second-order scattering coefficients are defined by 
\begin{align}
&\mathcal{U}[j]\mathbf{x}(v)=\sigma(\boldsymbol{{\Psi}}_j\mathbf{x}(v)), \label{eqn: scalar scattering first order} \quad \text{and}\\ &\mathcal{U}[j,j']\mathbf{x}(v)=\mathcal{U}[j']\mathcal{U}[j]\mathbf{x}(v)=\sigma(\boldsymbol{{\Psi}}_j\sigma(\boldsymbol{{\Psi}}_j\mathbf{x}(v))), \label{eqn: scalar scattering second order}
\end{align}
for $0\leq j\leq  j'\leq J$, where $\sigma$ is again an activation function.
Further, if desired, $m$-th order scattering coefficients for  $m\geq 3$ can be defined similarly by ${\mathcal{U}}[j_1,j_2,\ldots,j_m]\mathbf{x}(v)={\mathcal{U}}[j_m]{\mathcal{U}}[j_1,\ldots,j_{m-1}]\mathbf{x}(v)$.

When used for node-level tasks, one can view the collection of all $\mathcal{U}[j]\mathbf{x}$ and  $\mathcal{U}[j,j']\mathbf{x}$ as a new set of node-features that can then be fed into a prediction network. For graph-level tasks, one typically first performs a global aggregation by computing moments
$\mathcal{S}[j,q]\mathbf{x}
=\sum_{i=1}^n\left|\mathcal{U}[j]\mathbf{x}(v_i)\right|^q$,
and $\mathcal{S}[j,j',q]\mathbf{x}
=\sum_{i=1}^n\left|\mathcal{U}[j,j']\mathbf{x}(v_i)\right|^q$ 
before applying a final prediction head.

Additionally, we note that various works have incorporated diffusion wavelets and scattering transforms into high-performing GNNs. For instance \citet{wenkeltowards} and \citet{johnson2025manifold} use learnable combinations of the node features and the wavelets filters in each layer, whereas \citet{tong2024learnable} replaces with dyadic wavelets with wavelets of the form $\mathbf{P}^{t_{j}}-\mathbf{P}^{t_{j+1}}$, where the $t_j$ are scales that are learned through a differentiable selector matrix.
We further note that diffusion wavelets and other graph wavelets can be used outside of the scattering transform for exploratory data analysis and other tasks; and that they can be extended to generalized graphs such as directed graphs, hypergraphs, and simplicial complexes \citep{venkat2024directed,venkat2024mapping,cloninger2021natural,irion2016efficient,saito2024multiscale,sun2025hyperedge, viswanath2026hiponet}.

\subsection{The manifold hypothesis}\label{sec:manifold_hyp}

In this section, we consider high-dimensional point clouds $\{v_i\}_{i=1}^n\subseteq \mathbb{R}^D$ and review techniques that aim to find and utilize a latent low-dimensional structure by constructing a graph that may be interpreted as a discrete approximation of a latent underlying data manifold.
High-dimensional point clouds arise in many applications, with prominent examples including single-cell data analysis \cite{Ahlmann-Eltze2025,kroger2024unveiling} and neural data \citep{kaufman2016largest}.

Unfortunately, traditional methods often suffer from the curse of dimensionality when $D$ is large. However, it is often possible to overcome these difficulties by uncovering hidden geometric structure, thus reducing the effective dimension of the data. For instance, although neural population activity is recorded in a high-dimensional space whose dimensions correspond to individual neurons, the observed activity often lies on a low-dimensional manifold reflecting the network’s behaviorally realizable dynamics \citep{perich2025neural, sadtler2014neural}. Such settings lead us to the \emph{manifold hypothesis}: the assumption that the data lies on an unknown (compact, Riemannian) manifold $\mathcal{M}$, i.e., a low-dimensional subset of $\mathbb{R}^D$, whose intrinsic dimension $d$ is much lower than the ambient dimension $D$.

A commonly used approach for data that satisfies the manifold hypothesis is to construct a graph $G=(V, E)$, whose vertices are the data points $\{v_i\}_{i=1}^n$, interpreted as a discretization of the underlying manifold.
For instance, the diffusion maps algorithm \citep{coifman:diffusionMaps2006} interprets the diffusion matrix $\mathbf{P}$ as a discrete approximation of an underlying manifold heat kernel. It uses the first $m$ eigenvectors and eigenvalues of $\mathbf{P}$,
$\{\mathbf{u}_i\}_{i=1}^m$ and $\{\omega_i\}_{i=1}^m$
to map the data points $v_i$ to $\mathbb{R}^m$ via  
$v_i\rightarrow (\omega_1^t\mathbf{u}_1[i],\ldots,\omega_m^t\mathbf{u}_m[i])$, where $t$ is a hyper-parameter known as diffusion time. 
Laplacian eigenmaps \citep{belkin:laplacianEigen2003} proceed similarly, embedding points into a lower-dimensional space using coordinates derived from the low-frequency eigenvectors of the graph Laplacian.

To connect manifold learning to geometric deep learning, we note that a natural way to construct neural networks on a compact Riemannian manifold is to define convolutions using the spectral decomposition of the Laplace-Beltrami operator $\mathcal{L}$. This parallels popular spectral graph neural networks such as \citet{bruna:spectralNN2014} and \citet{Defferrard2018}, which utilize the eigendecomposition of the graph Laplacian. Papers such as \citet{perlmutter:geoScatCompactManifold2020,wang2021stability}, and \citet{wang2021stabilityrel} analyze the theoretical stability of such networks, and \citet{chew2023convergence,wang2022convolutional} introduce numerical methods for implementing such networks on point clouds sampled from an unknown manifold, with provable statistical consistency guarantees. Most closely related to our proposed method are \citet{perlmutter:geoScatCompactManifold2020,chew2022geometric,johnson2025manifold}, which introduce networks based on diffusion wavelets (see Section \ref{sec: background wavelets}).

This work extends diffusion wavelets to the case of vector-valued signals $\mathbf{w}:V\rightarrow\mathbb{R}^{D}$. Where appropriate, we will assume that these vector-valued signals take values lying in the tangent bundle $\mathcal{T}\mathcal{M}$ of an unknown manifold, with each $\mathbf{w}(v_i)$ taking values in $\mathcal{T}_{v_i}\mathcal{M}$, the tangent space centered at $v_i$. In this setting, we  consider the Connection Laplacian $\Delta$, which is the natural, higher-order analog of the Laplace-Beltrami Operator $\mathcal{L}=-\text{div}\circ\nabla$. (In particular, $\Delta$ is a differential operator which acts upon functions defined on $\mathcal{T}\mathcal{M}$ whereas $\mathcal{L}$ acts on functions defined on $\mathcal{M}$.) We note that \citet{battiloro2024tangent} use this operator and the associated heat semigroup $e^{-t\Delta}$ to define spectral neural networks for tangent-bundle valued data, and also analyze the convergence of such networks as $n\rightarrow\infty$ (as discussed in Appendix \ref{app:convergence_remarks}).

Our method, detailed in Section \ref{sec: vector scattering}, extends diffusion wavelets and associated networks to this setting. In short, we define wavelets in terms of a  vector diffusion matrix $\mathbf{Q}$, based on \citet{singer2012vector}, which introduces a similar operator in the context of constructing vector diffusion maps. 
Similar to works such as \citet{chew2022manifold,johnson2025manifold} --- which view powers of the standard diffusion operator $\mathbf{P}^t$ as a computationally efficient proxy for the heat semigroup $e^{-t\mathcal{L}}$ associated to the Laplace-Beltrami operator --- we view powers of $\mathbf{Q}$ as a proxy for $e^{-t\Delta}$, the heat semigroup associated to the connection Laplacian. This approximation allows us to implement the associated vector diffusion wavelets using sparse multiplications and avoid explicit eigendecompositions, thereby significantly increasing the computational efficiency and scalability our method.

\subsection{Equivariant GNNs}\label{sec: background EGNN}
In this section, we consider geometric graphs where the vertices $\{v_i\}_{i=1}^n\subseteq\mathbb{R}^D$ are not necessarily assumed to lie upon a $d$-dimensional manifold, $d<D$. In this case, however, we can still design a network that aims to utilize the intrinsic structure and symmetries of the data.

That is, taking inspiration from Equivariant GNNs \citep{satorras2021n,batzner20223,han2022geometrically,duval2023hitchhiker}, we show that our network has desirable symmetries with respect to rotations in Section \ref{sec: vector scattering}. Specifically, we let $\mathbf{R}\in\mathbb{R}^{D\times D}$ denote a rotation matrix that we assume acts on our vertex set $\{v_i\}_{i=1}^n\rightarrow\{\mathbf{R}v_i\}_{i=1}^n$. We let $\mathbf{x}=\mathbf{x}^{(0)}$ and $\mathbf{w}=\mathbf{w}^{(0)}$ denote initial scalar-valued and vector-valued signals, and we let $\mathbf{x}^{(\ell)}$ and $\mathbf{w}^{(\ell)}$ denote our representations after $\ell$ layers. 

The manner in which our network processes the scalar-valued signals is not affected by the action of the rotation $\mathbf{R}$, i.e., we  have that 
$\overline{\mathbf{x}}^{(\ell)}=\mathbf{x}^{(\ell)}$, where $\overline{\mathbf{x}}^{(\ell)}$ is analog of $\mathbf{x}^{(\ell)}$ but for the rotated system. This property is referred to as \emph{rotational invariance}. For the vector-valued signals, we show that rotating the system rotates our representation in the corresponding manner, i.e., $\overline{\mathbf{w}}^{(\ell)}=\mathbf{R}\mathbf{w}^{(\ell)}$. This property is referred to as \emph{rotational equivariance}. Prior work on equivariant GNNs \citep{satorras2021n,batzner20223,han2022geometrically,duval2023hitchhiker} demonstrates that these symmetries provide an effective inductive bias for learning.

\section{Vector Diffusion Wavelets and {\modelname}s}\label{sec: vector scattering}

As in Section \ref{sec: background}, we assume that we are given $F_{\text{scalar}}$  scalar-valued signals $\mathbf{x}_i:V\rightarrow\mathbb{R}$, $1\leq i\leq F_{\text{scalar}}$, and $F_{\text{vec}}$ vector-valued signals $\mathbf{w}_i:V\rightarrow\mathbb{R}^{D}$, $1\leq i\leq F_{\text{vec}}$.
As in Section \ref{sec: background wavelets}, when convenient, we will identify signals with vectors. We let $\mathbf{w}_i[j]=\mathbf{w}
_i(v_j)\in\mathbb{R}^{D}$ denote the value of the signal $\mathbf{w}_i$ at each vertex, and when we view $\mathbf{w}_i$ as a vector in $\mathbb{R}^{nD}$, we write
$\mathbf{w}_i=[\mathbf{w}_i[1]^\top,\ldots,\mathbf{w}_i[n]^T]^\top
=[\mathbf{w}_i[1][1],\ldots,\mathbf{w}_i[1][D],
\ldots,\mathbf{w}_i[n][1],\ldots,\mathbf{w}_i[n][D]]^\top.$

To process the vector-valued features, we introduce a new form of diffusion wavelets, inspired by vector diffusion maps \citep{singer2012vector} and defined in terms of a vector diffusion matrix
$\mathbf{Q}\in\mathbb{R}^{nD\times nD}$.
For each node $v_i\in\mathbb{R}^D$, we  construct a local basis  $\{\mathbf{u}_{i,1}, \ldots, \mathbf{u}_{i,D}\}$ which  provides a local coordinate system for each node relative to its neighbors. 
To build this local basis, we let  $\mathcal{N}_{v_i} = \{ v_j \in V : \{v_i,v_j\}\in E\}$  denote the one-hop neighborhood of $v_i$ and let $n_i=|\mathcal{N}_{v_i}|$ denote the number of neighbors.\footnote{We assume that we have $n_i\geq D$. Otherwise, we add edges between each $v_i$ and its nearest neighbors until $\text{deg}(v_i)\geq D$. {For further discussion, see Appendix \ref{app:convergence_remarks}}. }\label{footnote:sufficient_neighbors_for_D}
We then define a relative distance matrix $\mathbf{C}_i \in \mathbb{R}^{D \times n_i}$, whose columns are denoted by $(v_{i_j} - v_i)$ where $v_{i_j}$ is the $j$-th neighbor of $v_i$. 
In order to give more weight to nearer neighbors, we then rescale $\mathbf{C}_i$ by defining
$\mathbf{B}_i=\mathbf{C}_i\mathbf{D}_i,$
where $\mathbf{D}_i$ is a diagonal matrix defined by
$\mathbf{D}_i[j,j]=\sqrt{K_\epsilon(v_i,v_{i_j})}$,
and $K_\epsilon(\cdot,\cdot)$ is a Gaussian kernel with scale $\epsilon,$ $K(v_i,v_{i_j})=\exp(-\|v_i-v_{i_j}\|_2^2/\epsilon)$.

We next compute the singular value decomposition (SVD),  $\mathbf{B}_i=\mathbf{U}_i\boldsymbol{\Sigma}_i\mathbf{V}_i^\top$. We note that, by the definition of the SVD, the matrix $\mathbf{U}_i$ is unitary, and so its columns $\{\mathbf{u}_{i,1}, \dots, \mathbf{u}_{i,D}\}$  form an orthonormal basis for $\mathbb{R}^{D}$, which we interpret as a defining a local coordinate system centered around node $v_i$.  We assume throughout that each of the singular values, i.e., the diagonal entries of $\Sigma_i$, are in decreasing order and that none of the singular values have multiplicity greater than one. This ensures that the SVD is unique up to  sign flips of the singular vectors. 

For all $i$ and $j$, we define $\mathcal{O}_{i,j} = \mathbf{U}_i\mathbf{U}_j^\top \in \mathbb{R}^{D \times D}$ which \emph{shifts between the local coordinate systems} centered at $v_i$ and $v_j$ as shown in Figure~\ref{fig:frame_transport_diagram}.
However, since the SVD is only unique up to sign flips, one may obtain a different SVD, $\mathbf{B}'_i=\mathbf{U}'_i\boldsymbol{\Sigma}_i(\mathbf{V}')_i^\top$ by replacing both $\mathbf{u}_{i,k}$ and $\mathbf{v}_{i,k}$ with $\mathbf{u}'_{i,k}=-\mathbf{u}_{i,k}$ and $\mathbf{v}'_{i,k}=-\mathbf{v}_{i,k}$ 
for any fixed $k$. Therefore, in order to ensure that the matrices $\mathcal{O}_{i,j}$ are well-defined (i.e., independent of these sign choices), we employ a \emph{sign-flipping} technique that, when needed, replaces $\mathbf{u}_{i,k}$ with $-\mathbf{u}_{i,k}$. This ensures that $\langle\mathbf{u}_{i,k},\mathbf{u}_{j,\ell}\rangle$
is always non-negative (full details are provided in Appendix \ref{sec: sign ambiguity}).

\begin{figure}[hbt!]
    \begin{center}
    \includegraphics[width=0.7\columnwidth]{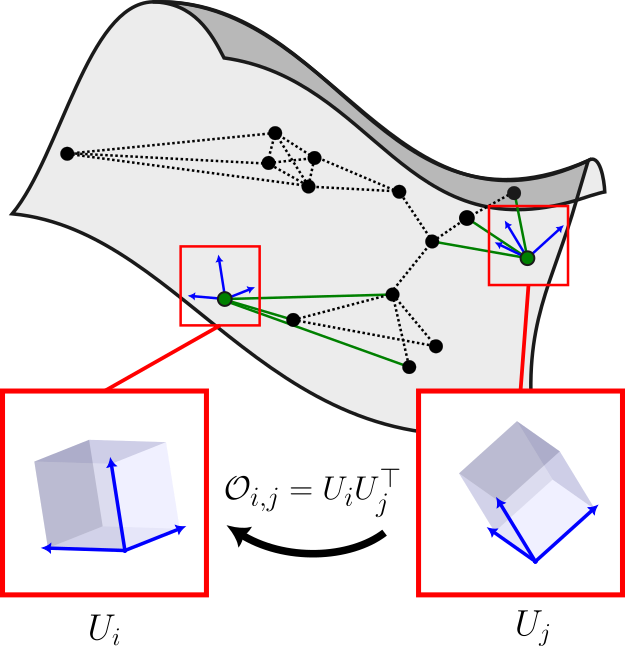}
    \caption{An example point cloud of data lying on a continuous manifold. Dashed lines: edges of a $k$-NN graph. Solid green lines: neighbor edges used to generate the local PCA for each node. Blue arrows: local coordinate systems/frames $U_i$ and $U_j$ induced by the local graph structure. $\mathcal{O}_{i,j}$ is the rotation that transforms $U_j$ to $U_i$.} 
    \label{fig:frame_transport_diagram}
    \end{center}
\end{figure}

Using these $\mathcal{O}_{i,j}$, we define a $nD\times nD$ vector-valued diffusion matrix in block form by
\begin{align*}
    \mathbf{Q}[i,j] = \mathbf{P}[i,j] \mathcal{O}_{i,j}\in\mathbb{R}^{D\times D}.
\end{align*}
We observe that for $i\neq j$,  we have $\mathbf{Q}[i,j]=\mathbf{0}$ unless $\{v_i,v_j\}\in E$ in which case we do not need to compute 
$\mathcal{O}_{i,j}$.
This significantly reduces our computational cost. (Details on  computational complexity are given in Appendix \ref{app:complexity_diffusion_wavelets}.)

Given $\mathbf{Q}$, we may then define vector diffusion wavelets and scattering coefficients analogous to  \eqref{eqn: scalar wavelets}, \eqref{eqn: scalar scattering first order}, and \eqref{eqn: scalar scattering second order}. Specifically,
we  define $\widetilde{\mathcal{W}}_J = \{\widetilde{\mathbf{\Psi}}_j\}_{j=0}^ J\cup\{\widetilde{\mathbf{\Phi}}_J\}$ by 
\begin{align}\label{eqn: vec wavelets}
    \widetilde{\mathbf{\Psi}}_j &= \mathbf{Q}^{2^{j-1}} - \mathbf{Q}^{2^{j}} = \mathbf{Q}^{2^{j-1}}(\mathbf{I} - \mathbf{Q}^{2^{j-1}}),
\end{align}
for $1\leq j\leq J$,  $\widetilde{\mathbf{\Psi}}_0 = \mathbf{I} - \mathbf{Q}$, and $\widetilde{\mathbf{\Phi}}_J = \mathbf{Q}^{2^J}$.

We then define the vector-valued scattering coefficients of a vector-valued signal $\mathbf{w}: V\rightarrow\mathbb{R}^{D}$ by \begin{align}
&\widetilde{\mathcal{U}}[j]\mathbf{w}(v)=\sigma(\widetilde{\boldsymbol{{\Psi}}}_j\mathbf{w}(v)), \quad \text{and} \label{eqn: vec scattering first order}\\ &\widetilde{\mathcal{U}}[j,j']\mathbf{w}(v)\hspace{-.01in}=\hspace{-.01in}\widetilde{\mathcal{U}}[j']\widetilde{\mathcal{U}}[j]\mathbf{w}(v)\hspace{-.01in}=\hspace{-.01in}\hspace{-.01in}\sigma(\widetilde{\boldsymbol{{\Psi}}}_{j'}\sigma(\widetilde{\boldsymbol{{\Psi}}}_j\mathbf{w}(v))). \label{eqn: vec scattering second order}
\end{align}
When it is convenient to think of the signal $\mathbf{w}$ as a vector in $\mathbb{R}^{nd}$, we will write the first-order coefficients as $\widetilde{\mathcal{U}}[j]\mathbf{w}=\sigma(\widetilde{\boldsymbol{{\Psi}}}_j\mathbf{w})$ instead of $\widetilde{\mathcal{U}}[j]\mathbf{w}(v)=\sigma(\widetilde{\boldsymbol{{\Psi}}}_j\mathbf{w}(v))$ (and similarly with the higher-order coefficients).

Analogous to, e.g., \citet{min2022can,min2021gsan,tong2024learnable,johnson2025manifold,wenkeltowards}, in our experiments, we use these vector-diffusion wavelets and scattering transforms as the basis for a variety of GNNs, which we refer to as Vector Diffusion Wavelet Graph Neural Networks ({\modelname}s).
Additionally, we note in the scalar-valued signal case, \citet{tong2024learnable} show that the scattering transform can be extended to include non-dyadic diffusion wavelets of the form $\mathbf{P}^{t_{j}}-\mathbf{P}^{t_{j+1}}$, where $0=t_0<t_1<t_2\ldots$ is a generic sequence of increasing integers. Vector diffusion wavelets may be readily extended to non-dyadic scales in the same way.

\subsection{Theoretical Results}

The following result establishes frame bounds for the vector diffusion wavelets,  similar to results for the scalar-valued diffusion wavelets \citep{gama:diffScatGraphs2018,perlmutter2019understanding}. It implies  that the vector diffusion wavelet transform may be stably inverted and is robust to additive noise.\footnote{Proofs of all theorems are provided in Appendix \ref{app: proofs}.}  

\begin{theorem}\label{thm: Frame vector wavelets} There exists a universal constant $c>0$ such that for all $\mathbf{w}\in\mathbb{R}^{nD}$ we have 
 \begin{equation*}
c\:\frac{d_{\min}}{d_{\max}}\|\mathbf{w}\|_{2}^2\leq    \big\|\widetilde{\mathcal{W}}_J\mathbf{w}\big\|^2_{2}  \leq  \frac{d_{\max}}{d_{\min}}\|\mathbf{w}\|^2_{2},
\end{equation*}
where   $d_{\min}$ and $d_{\max}$ denote the minimal and maximal vertex degrees and $\big\|\widetilde{\mathcal{W}}_J\mathbf{w}\big\|^2_{2}\coloneqq \sum_{j=0}^{J}\big\|\widetilde{\boldsymbol{\Psi}}_j\mathbf{w}\big\|^2_{2} + \big\|\widetilde{\boldsymbol{\Phi}}_J\mathbf{w}\big\|^2_{2}$.
\end{theorem}

We next show that our vector diffusion wavelets 
and scattering coefficients are equivariant to the actions of the Special Orthogonal group, $SO(D)$, i.e., the set of all $D\times D$ rotation matrices.
We assume that our entire system has been subjected to the same global rotation $\mathbf{R}\in SO(D)$ and use bars to denote objects in the rotated coordinate system. Accordingly, $\overline{v}_i=\mathbf{R}v_i$, where $\overline{v}_i=(\overline{v}_i[1],\overline{v}_i[2],\ldots,\overline{v}_i[D])^\top$ denotes the position of the $i$-th vertex in the rotated system, and $\overline{\mathbf{Q}}$ denotes the vector diffusion matrix constructed from the $\overline{v}_i$. 
Motivated by examples such as when our vector-valued-node features are, e.g., the coordinates of each vertex, we  assume that rotating the entire system rotates the values of vector-valued features, giving $\overline{\mathbf{w}}_f=\mathbf{R}\cdot\mathbf{w}$,
$\mathbf{R}\cdot \mathbf{w}_f=[(\mathbf{R}\mathbf{w}_f(v_1)^\top,\ldots,(\mathbf{R}\mathbf{w}_f(v_n))^\top]^\top.$
Furthermore, we  assume that rotations do not change the connectivity structure of the graph, i.e., $\overline{\mathbf{A}}=\mathbf{A}$.
The following theorems establish the equivariance of vector diffusion wavelets and associated scattering coefficients. For a visual illustration of Theorem \ref{thm:wavelet-equivariance}, see Figure \ref{fig:2d_wavelets_equivar} in the appendix.

\begin{theorem}[Wavelet Equivariance]\label{thm:wavelet-equivariance}
For any vector-valued node feature $\mathbf{w}$, we have, 
$\overline{\widetilde{\mathbf{\Phi}}}_J\overline{\mathbf{w}} =\mathbf{R} \cdot \widetilde{\mathbf{\Phi}}_j\mathbf{w}$ and 
\begin{equation*}
\overline{\widetilde{\mathbf{\Psi}}}_j\overline{\mathbf{w}} =\overline{\widetilde{\mathbf{\Psi}}}_j(\mathbf{R} \cdot \mathbf{w})=\mathbf{R} \cdot \widetilde{\mathbf{\Psi}}_j\mathbf{w}, \quad \text{for all}\quad 0\leq j\leq J.\end{equation*}  
\end{theorem}

\begin{theorem}[Scattering Equivariance]\label{thm: equivariant scattering}
Assume that $\sigma$ commutes with rotations, i.e., that 
$\sigma(\mathbf{R}\cdot\mathbf{w})=\mathbf{R}\cdot\sigma(\mathbf{w})
$ for all rotation matrices $\mathbf{R}$. Then, for all $m\geq 1$, we have
\begin{align*}
\overline{\widetilde{\mathcal{U}}}[j_1,\ldots,j_m]\overline{\mathbf{w}}&=\overline{\widetilde{\mathcal{U}}}[j_1,j_2,\ldots,j_m](\mathbf{R}\cdot\mathbf{w})\\&=\mathbf{R}\cdot\widetilde{\mathcal{U}}[j_1,\ldots,j_m]\mathbf{w}. %
\end{align*}
\end{theorem}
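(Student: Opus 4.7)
The plan is to proceed by induction on the depth $m$, using Theorem~\ref{thm:wavelet-equivariance} at each layer to commute the rotation past a vector diffusion wavelet, and the hypothesis on $\sigma$ to commute it past the activation. Since the higher-order coefficients are defined by the recursion $\widetilde{\mathcal{U}}[j_1,\dots,j_m]\mathbf{w} = \widetilde{\mathcal{U}}[j_m]\widetilde{\mathcal{U}}[j_1,\dots,j_{m-1}]\mathbf{w}$, induction is the natural vehicle, and the base case contains essentially all of the content.

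For the base case $m=1$, I would begin from the definition $\overline{\widetilde{\mathcal{U}}}[j_1](\mathbf{R}\cdot\mathbf{w}) = \sigma\bigl(\overline{\widetilde{\boldsymbol{\Psi}}}_{j_1}(\mathbf{R}\cdot\mathbf{w})\bigr)$ and apply Theorem~\ref{thm:wavelet-equivariance} to rewrite the inner expression as $\mathbf{R}\cdot\widetilde{\boldsymbol{\Psi}}_{j_1}\mathbf{w}$. Invoking the assumption $\sigma(\mathbf{R}\cdot\mathbf{u}) = \mathbf{R}\cdot\sigma(\mathbf{u})$ then gives $\mathbf{R}\cdot\sigma(\widetilde{\boldsymbol{\Psi}}_{j_1}\mathbf{w}) = \mathbf{R}\cdot\widetilde{\mathcal{U}}[j_1]\mathbf{w}$, which is the claim.

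For the inductive step, assume the result at depth $m-1$ and write
\begin{equation*}
\overline{\widetilde{\mathcal{U}}}[j_1,\dots,j_m](\mathbf{R}\cdot\mathbf{w}) = \overline{\widetilde{\mathcal{U}}}[j_m]\Bigl(\overline{\widetilde{\mathcal{U}}}[j_1,\dots,j_{m-1}](\mathbf{R}\cdot\mathbf{w})\Bigr).
\end{equation*}
The inductive hypothesis identifies the inner quantity with $\mathbf{R}\cdot\mathbf{y}$ where $\mathbf{y} \coloneqq \widetilde{\mathcal{U}}[j_1,\dots,j_{m-1}]\mathbf{w}$, after which the base-case argument, now applied to the new vector-valued signal $\mathbf{y}$, produces $\overline{\widetilde{\mathcal{U}}}[j_m](\mathbf{R}\cdot\mathbf{y}) = \mathbf{R}\cdot\widetilde{\mathcal{U}}[j_m]\mathbf{y}$, closing the induction. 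A small but essential observation is that Theorem~\ref{thm:wavelet-equivariance} is stated for arbitrary $\mathbf{w}\in\mathbb{R}^{nd}$ and the hypothesis on $\sigma$ is assumed universally, so both may be freely reapplied to the intermediate feature $\mathbf{y}$, which has the same $nd$-dimensional block structure as the original input.

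I do not expect a genuine obstacle: the geometric content (dependence on the local bases $\mathbf{U}_i$, the sign-disambiguation, and the transport matrices $\mathcal{O}_{i,j}$) has already been absorbed into Theorem~\ref{thm:wavelet-equivariance}, and $\sigma$ is assumed to commute with rotations by fiat. The only care required is notational, namely to keep the barred operators $\overline{\widetilde{\boldsymbol{\Psi}}}_j$, $\overline{\widetilde{\mathcal{U}}}$ (built from the rotated geometry) distinct from their unbarred counterparts, and to pair each rotation in the appropriate coordinate system; once that bookkeeping is in place, the argument collapses to two lines per induction step.
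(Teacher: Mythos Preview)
Your proposal is correct and matches the paper's own proof essentially line for line: induction on $m$, with the base case handled by combining Theorem~\ref{thm:wavelet-equivariance} and the assumed commutation of $\sigma$ with rotations, and the inductive step by applying the base-case computation to the intermediate signal $\mathbf{y}=\widetilde{\mathcal{U}}[j_1,\dots,j_{m-1}]\mathbf{w}$. The only cosmetic difference is that the paper indexes the induction from $m$ to $m+1$ whereas you go from $m-1$ to $m$.
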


Note that Theorem \ref{thm: equivariant scattering} introduces the assumption that $\sigma$ commutes with rotations, which we discuss further in Appendix \ref{app: proof of scattering equivariance}.
Lastly, since relative distance matrices $\mathbf{C}_i$ are defined in terms of relative differences $v_{i_j}-v_i$, this ensures that our method is automatically translation invariant.

\section{Experimental Results}\label{sec:experiments}

We evaluate {\modelname}s across three progressively complex experimental settings: synthetic point clouds, real-world wind velocity measurements, and neural activity data. Our experiments include both node-level and graph-level targets; scalar-valued and vector-valued targets; and inductive classification, regression, and representation learning tasks. Collectively, these experiments demonstrate that VDWs provide a flexible framework for vector-valued learning on geometric data. A detailed discussion of their computational complexity can be found in Appendix \ref{app:complexity_diffusion_wavelets}.\footnote{Our code is available at \url{https://github.com/dj408/vdw-gnns}.}

\subsection{Synthetic ellipsoids}\label{sec:experiments:ellipsoids}
 
We first consider node-level and graph-level tasks on synthetic 3D point clouds randomly sampled from the surface of randomly generated ellipsoids.
For this task, we use a {\modelname} model, \textcolor{black}{which features vector- and scalar-track scattering modules with equivariant hidden feature mixing layers as shown in Figure \ref{fig:ellipsoid model architecture}}. 
As baselines, we use (1) \modelname\:(non-equivariant), an ablated version of our model which treats each vector-valued signal $\mathbf{w}:V\rightarrow \mathbb{R}^3$ as three separate scalar-valued signals; (2) LEGS \citep{tong2020digraph}, a non-equivariant scattering based GNN; two equivariant GNNs, (3) EGNN \citep{satorras2021n} and (4) TFN \citep{thomas:tensorFieldNetworks2018}; and several standard message passing networks: (5) GCN \citep{kipf2016semi}, (6) GAT \citep{velivckovic2017graph}, and (7) GIN \citep{xu2018how}. For a detailed description of our architecture and experimental setup, see Appendix \ref{app:exper_detail:ellips}.

For node features, we use the 3D coordinates of the points. Collectively, our experiments  demonstrate the importance of rotational equivariance, with our {\modelname}s achieving comparable performance to other equivariant GNNs with significantly fewer trainable parameters.

\begin{figure}
\label{fig:ellipsoids-architecture}
    \centering
    \includegraphics[width=0.9\linewidth]{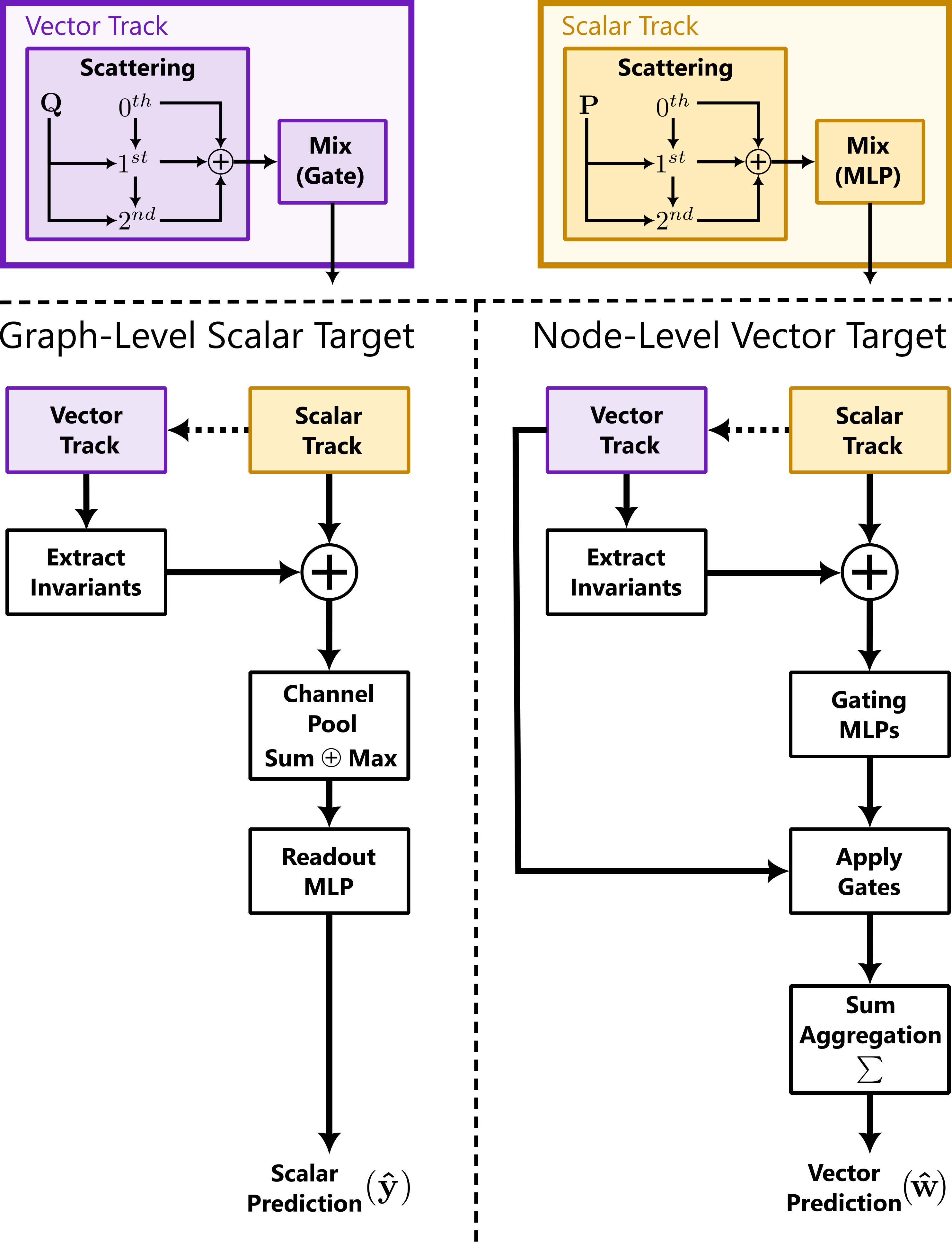}
    \caption{Model architecture variants for the ellipsoids experiments. Dotted arrows indicate that the scalar track's hidden features can be used as input in the vector track's mixing layer gate. Individual module descriptions can be found in Appendix~\ref{app:exper_detail:ellips:architecture}.}
    \label{fig:ellipsoid model architecture}
    \vspace{-.25in}
\end{figure}

\emph{We first consider a graph-level task,} on a data set of $N_G=512$ graphs, where the goal is to predict the Euclidean diameter of each graph, $\text{diam}(G_j)=\max_{v,v'\in V(G_j)}\|v-v'\|_2$. To construct these graphs,
 we define random ellipsoids $\mathcal{E}_j=\{(x,y,z)\in\mathbb{R}^3:x^2/a_j^2+y^2/b_j^2+z^2/c_j^2=1\}$, $1\leq j \leq N_G$, where $a_j$, $b_j$, and $c_j$ are i.i.d. randomly generated coefficients with each $a_j \sim \mathcal{N}(3, 0.5)$ and $b_j, c_j \sim \mathcal{N}(1, 0.2)$, and, for convenience, we use $x,y$ and $z$ to denote the coordinates of a point in 3D space, i.e., $v=(x,y,z)$. We note that by construction, we  usually have $a_j^2>b^2_j, c_j^2$ so these ellipsoids are typically more elongated along the $x$-axis than along the $y$- or $z$-axis.

From each ellipsoid $\mathcal{E}_j$, we then sample $n=128$ points, $\{v^{(j)}_i\}_{i=1}^n$, and construct a $k$-NN graph with $k=5$.
To demonstrate the utility of equivariant models, we rotate each ellipsoid in the test set (and only in the test set) by $90$ degrees so that they are most elongated along the $y$-axis. Our results are shown in Table \ref{tab:res:ellipsoids_diam}. We observe that while the non-equivariant models are able to perform well on the (non-rotated) validation set, they fail spectacularly on the test set. For example, the non-equivariant version of {\modelname} has an average validation mean square error (MSE) of $0.3126$, but an average test MSE of $2.9181$. By contrast, the equivariant version of {\modelname}, as well as the other equivariant networks, do not suffer from this limitation and achieve similar performance on the validation and test sets. Additionally, {\modelname} outperforms the equivariant baselines with only $59,779$ parameters compared to the $284,036$ in EGNN and the $29,672,961$ in TFN.

\emph{We next consider a node-level task} where the goal is to learn the value of a function $\mathbf{h}:\mathcal{E}\rightarrow\mathbb{R}^3$ defined on the underlying ellipsoid. At each vertex $v_i$, the direction of $\mathbf{h}(v_i)$ is chosen to be the outward normal vector of $\mathcal{E}$. The magnitude is constructed via a randomly generated bandlimited function, generated using the first $K=16$ eigenfunctions of the Laplace-Beltrami operator. (See  Appendix \ref{app:exper_detail:ellips:node_targets} for details.)
Our results, shown in Table \ref{tab:res:ellipsoids_harm_mod_normals}, generally tell a similar story to the graph-level results in Table \ref{tab:res:ellipsoids_diam}: the non-equivariant methods perform significantly worse on the rotated test set
versus the non-rotated validation set (although the drop in performance is not quite as extreme as before). The performance of {\modelname} is comparable to the other equivariant GNNs, slightly better than EGNN, and slightly worse than TFN. However, {\modelname} uses only $57,250$ parameters in comparison to $834,824$ for EGNN and $29,664,384$ for TFN.
The observation that our model far surpasses others in the diameter estimation task, but obtains only competitive results on the node-level regression task, may indicate that VDWs are particularly useful for graph-level tasks where it is important to capture global structure, due to the inherent multiscale nature of the wavelet filters. Overall, our results indicate that {\modelname} is an efficient, lightweight alternative to standard equivariant GNNs.

\begin{table*}[h!]
\centering
\caption{Results on the diameter prediction task (mean $\pm$ std. across 5-fold CV). Validation MSE is the (average) of the best score achieved during training for each fold. Best is \textbf{bolded}; second-best is \underline{underlined}. {\modelname} outperforms other equivariant methods with a fraction of the parameter count.}
\label{tab:res:ellipsoids_diam}
\begin{tabular}{lccr}
\toprule
\textbf{Model} & \textbf{Validation MSE $\downarrow$} & \textbf{(Rotated) test MSE $\downarrow$} & \textbf{Parameter count} \\
\midrule
{\modelname} (Ours) & $0.0035 \pm 0.0011$ & $\mathbf{0.0037 \pm 0.0022}$ & $59,779$ \\
{\modelname} (non-equivariant) (Ours) & $0.3126 \pm 0.5386$ & $2.9181 \pm 5.6549$ & $14,481$ \\
LEGS & $1.1812 \pm 2.5469$ & $2.2759 \pm 3.0514$ & $19,525$ \\
GCN & $0.1684 \pm 0.3247$ & $1.4566 \pm 2.8307$ & $60,801$ \\
GAT & $0.3964 \pm 0.6675$ & $1.4443 \pm 2.3166$ & $61,313$ \\
GIN & $0.4778 \pm 0.6908$ & $1.6425 \pm 2.5529$ & $93,825$ \\
EGNN (2-layer) & $0.0326 \pm 0.0310$ & $\uval{0.0284 \pm 0.0249}$ & $284,036$ \\
TFN (4-layer) & $0.0787 \pm 0.0070$ & $0.0828 \pm 0.0122$ & $29,672,961$ \\
\bottomrule
\end{tabular}
\end{table*}

\begin{table*}[h!]
\centering
\caption{Node-level results (mean $\pm$ std. across five-fold CV). Validation MSE is the (average) of the best score achieved during training for each fold. Best test MSE score is \textbf{bolded}; second-best is \underline{underlined}. {\modelname} is the second-best method to TFN, but with less than 0.2\% of the parameters.}
\label{tab:res:ellipsoids_harm_mod_normals}
\begin{tabular}{lccr}
\toprule
\textbf{Model} & \textbf{Validation MSE $\downarrow$} & \textbf{(Rotated) test MSE $\downarrow$} & \textbf{Parameter count} \\
\midrule
{\modelname} (Ours) & $0.1525 \pm 0.0069$ &
$\uval{0.1513 \pm 0.0080}$
 & $57,250$ \\
{\modelname} (non-equivariant) (Ours) & $0.2650 \pm 0.2654$ & $0.3552 \pm 0.2867$ & $14,611$ \\
LEGS & $0.2224 \pm 0.2167$ & $0.3575 \pm 0.3239$ & $15,335$ \\
GCN & $0.3528 \pm 0.3075$ & $0.4820 \pm 0.3669$ & $44,451$ \\
GAT & $0.3594 \pm 0.3257$ & $0.4792 \pm 0.3639$ & $44,963$ \\
GIN & $0.3175 \pm 0.2839$ & $0.4536 \pm 0.3512$ & $77,475$ \\
EGNN (7-layer) & $0.1949 \pm 0.0297$ & $0.1960 \pm 0.0311$ & $834,824$ \\
TFN (4-layer) & $0.1137 \pm 0.0032$ & $\mathbf{0.1145 \pm 0.0039}$ & $29,664,384$ \\
\bottomrule
\end{tabular}
\end{table*}

\subsection{Wind field reconstruction}\label{sec:experiments:wind}

We next consider a real-world wind data set previously considered by \citet{battiloro2024tangent}, who introduce a tangent-bundle neural network defined in terms of the connection Laplacian. 
This data set consists of the Earth's mean 10-meter (from the surface) wind field measurements for a single day (January 1, 2016), retrieved from the \citet{noaa_ncep_reanalysis}.
Each sample in the data set has zonal (latitude-aligned) and meridional (longitude-aligned) vector components, as well as the corresponding latitude/longitude coordinates for the location of the measurement. Similar to \citet{battiloro2024tangent}, in our experiment, we aim to reconstruct masked wind velocity measurements. Practically, this task can be interpreted as recovering lost signals in a sensor network degradation scenario (e.g., random remote wind sensor failures across the globe).

We first subsample the data set to 2000 wind measurements. We split those 2000 measurements, 70/30, into observed and masked sets, and then further split the masked set 10/10/10 to obtain masked-training, masked-validation, and masked-test sets. For all points in the masked set, we replace wind vector features with the component-wise mean vector of the observed set, and set a reconstruction target as the original wind vector.
Here, we wish to demonstrate the utility of our method for processing features that take values in the tangent plane of the 2D sphere lying in 3D space. Accordingly, we embed measurement locations on the unit sphere by converting latitude/longitude to 3D Earth-centered, Earth-fixed (ECEF) Cartesian coordinates, and normalize by the radius of the Earth (assuming sphericality). We also lift the 2D wind vector features into the same 3D global coordinate system (details in  Appendix \ref{app:exper_detail:wind}).

We then construct a modified $k$-NN graph ($k=3$) using Euclidean distance in  3D in which (i) observed nodes are connected to their $k$ nearest observed neighbors with symmetric (undirected) edges, and (ii) masked nodes receive only directed incoming edges from their $k$ nearest observed nodes, observed $\to$ masked, (i.e., we take $\mathbf{D}$ to be the out-degree matrix when computing $\mathbf{P}$). This modification prevents non-informative messages masked nodes from averaging with the observed set.
Edge weights are initialized as inverse distance, $w_{ij} = 1/d_{ij}$, and then lazily-normalized by target node so that each node's incoming weights sum to $0.5$.

We train all models with the mean squared error (MSE) loss function, using the MSE of the masked-validation set to enforce early stopping and prevent overfitting. We evaluate the models both on their MSE on the masked-test set wind measurements. Akin to the ellipsoids experiments in Section \ref{sec:experiments:ellipsoids}, we also compute their MSE after a random 3D rotation is applied to the test set. In the latter case, we use rejection sampling of randomly generated candidate 3D rotation matrices, until a matrix with a rotation angle between 90 and 160 degrees is found.

We again test LEGS, GCN, GAT, GIN, EGNN, and TFN as baselines, as well as the DD-TNN (`Domain-Discretized Tangent Bundle Neural Network') model from \citet{battiloro2024tangent}. We note that somewhat similarly to our method, DD-TNN uses local SVD to define matrices $\mathcal{O}_{i,j}$ which are then used to construct an approximation of the connection Laplacian, also referred to as a sheaf Laplacian. DD-TNN then uses this sheaf Laplacian to define a spectral convolutional network. Note that in order to implement EGNN and TFN for this task, we decouple (i) measurement points' global positions from (ii) these points' wind vectors, such that message passing occurs between `position on Earth' neighbors, but messages update in `wind vector space'. We repeat our experiment five times with unique sampling and rotation seeds. More experimental details are found in Appendix \ref{app:exper_detail:wind}.  Aggregated test set results are in Table \ref{tab:res:wind}. 

We use a simple {\modelname} model for this task, in which we apply a single layer of vector diffusion wavelets and then, for each node, concatenate the wind vectors and edge weights (inverse distances) of its three closest neighbors. We then feed this transformed representation into a three-layer MLP. (Preliminary experiments indicated that this outperformed more complex models on this data set.)

\setlength{\tabcolsep}{6pt}
\begin{table*}[h!]
\centering
\caption{Wind field reconstruction task results on the masked-test set, (mean $\pm$ std.) across five repetitions. Best test MSE score is \textbf{bolded}; second-best is \underline{underlined}. \textcolor{black}{{\modelname} again achieves best or second-best results among comparable models with far fewer parameters.}}
\label{tab:res:wind}
\begin{tabular}{lcccrr}
\toprule
\textbf{Model} & 
\textbf{MSE} $\downarrow$ &
\textbf{Rotated MSE $\downarrow$} &
\textbf{Sec. per epoch} &
\textbf{Best epoch} &
\textbf{Parameter count} \\
\midrule
{\modelname} (ours) & \textbf{2.7546 ± 0.3671} & $\uval{3.2266 \pm 0.5255}$ & \textbf{0.0041 ± 0.0007} & 93 ± 41 & 20,099 \\
LEGS & 11.2551 ± 2.3892 & 27.5993 ± 9.2066 & 0.0080 ± 0.0003 & 258 ± 132 & 21,319 \\
GCN & 2.8662 ± 0.4264 & 3.5242 ± 0.6116 & $\uval{0.0043 \pm 0.0012}$ & 41 ± 17 & 50,435 \\
GAT & 3.2784 ± 0.2189 & 3.8239 ± 0.1370 & 0.0053 ± 0.0039 & 54 ± 45 & 34,179 \\
GIN & 3.1754 ± 0.1940 & 3.5410 ± 0.3372 & 0.0045 ± 0.0030 & 20 ± 4 & 50,435 \\
DD-TNN & 12.7592 ± 0.9288 & 14.8010 ± 1.8416 & 0.0114 ± 0.0003 & 118 ± 110 & 17,374 \\
EGNN (1-layer) & $\uval{2.7651 \pm 0.3089}$ & \textbf{2.7651 ± 0.3089} & 0.0120 ± 0.0166 & 448 ± 406 & 134,402 \\
TFN (2-layer) & 3.9898 ± 0.1895 & 3.9898 ± 0.1895 & 0.0128 ± 0.0034 & 88 ± 65 & 729,184 \\
\bottomrule
\end{tabular}
\end{table*}

Our model achieves the best test set error and second-best rotated-test set error, with far fewer parameters than its competitors, EGNN and GCN. It substantially outperforms DD-TNN, which was also designed to process tangent-bundle valued features. Our model also achieves the fastest per-epoch training time, with the important caveat that the diffusion operator $\mathbf{Q}$ is constructed once and cached before training, which takes approximately 1.1 seconds.\footnote{On our hardware, discussed in Appendix \ref{app:complexity_diffusion_wavelets}. Note that our model's per-epoch time cost could be reduced even further in the single-layer case by precomputing the wavelet transform step.}

\subsection{Multi-channel neural recordings}
\label{sec:experiments:macaque}

Our final experiment considers a neural activity data set from \citet{kaufman2016largest}, processed by \citet{gosztolai2025marbledatarepo}, and featured in three recent works that developed representation learning models for neural data: MAnifold Representation Basis LEarning (MARBLE) \citep{gosztolai2025marble}, Consistent EmBeddings of high-dimensional Recordings using Auxiliary variables (CEBRA) \citep{schneider2023learnable}, and Latent Factor Analysis for Dynamical Systems (LFADS) \citep{pandarinath2018inferring}. More details on these models are in Appendix \ref{app:exper_detail:macaque}.

The data set consists of 24-channel neural activity time-series data, recorded as a trained macaque (a species of monkey native to Asia and North Africa)
physically moved their hand on a screen towards one of seven targets (``conditions") in different 2D directions from a central position in order to obtain a reward. It contains data from 44 separate experimental days, where each day has between 189 and 607 trials (mean: 333; total: 14,660). Formally, the $j$-th trial from day $i$ corresponds to a time series of length $T$ taking values in $\mathbb{R}^{24}$, i.e., $\mathbf{x}^{(i,j)}: [T] \rightarrow \mathbb{R}^{24}$.
The modeling task is to learn low-dimensional embeddings of the per-timepoint neural signals among trials, within days.\footnote{The inconsistency of exact neural probe placement between experimental days prevents simple pooling of trials across days.} Embedding quality is assessed by the accuracy of a simple classifier taking them as input and predicting the corresponding condition. 

We use the version of the data set provided at \citet{gosztolai2025marbledatarepo}, and we generally follow the same preprocessing methods as their corresponding paper \citep{gosztolai2025marble}, except we do not apply PCA to the processed 24-dimensional features (for any model). Our setup also differs from previous treatments of this data in that we conduct our experiments in an inductive (rather than transductive) setting. This represents a more challenging task and ensures that model embeddings reflect a generalization of the representation function to unseen trials, rather than memorization of the training graph.

As in \citet{gosztolai2025marble}, we construct a continuous $k$-nearest neighbors (CkNN) graph \citep{berry2019consistent}, whose nodes are per-timepoint samples in a multi-trial, within-day neural state space and the edge set is constructed to have a density-adjusted number of neighbors.
To construct vector-valued node features, we again follow the lead of \citet{gosztolai2025marble} by numerically differentiating the neural activity at each node to obtain 24-dimensional ``neural velocities." Our model applies vector diffusion wavelets to these velocities, concatenates wavelets' outputs, feeds them into a four-layer MLP, and trains with a supervised contrastive loss \citep{khosla2020supervised} function enhanced with a custom, hard-negatives-focused sampling module. 

Performance is assessed using the same  probing methodology in \citet{gosztolai2025marble}: after learning a three-dimensional embedding of timepoints (the most challenging setting used in \citet{gosztolai2025marble}'s experiments), we flatten and concatenate timepoint embeddings per trial, train a support vector machine (SVM) on these aggregated embeddings, and evaluate classification accuracy on held-out trial embeddings aggregated for each day.
We report aggregate results across all 44 days (performing five-fold cross-validation on separate models per day). As shown in Table \ref{tab:res:macaque}, our {\modelname} model achieves an overall mean classification accuracy of $66\%$, slightly outperforming MARBLE, which achieves 65\% mean accuracy. However, across experimental days, these models' results distributions are not significantly different under a two-sided Wilcoxon test (see Figure \ref{fig:macaque_wilcoxon}, in Appendix \ref{app:exper_detail:macaque}). Nonetheless, our model both trains much faster and uses far fewer parameters than MARBLE, and significantly outperforms CEBRA and LFADS.
For further details on the data set, our experimental setup, and each of the models, see Appendix \ref{app:exper_detail:macaque}.

\setlength{\tabcolsep}{3pt}
\begin{table}[h!]
\small
\centering
\caption{Summary results for the neural data experiment. Classifier accuracy: the mean (across 44 days) of mean SVM classifier accuracies (across five-fold CV within days), $\pm$ standard deviation (of daily CV means). Best is \textbf{bolded}; second-best is \underline{underlined}.}
\label{tab:res:macaque}
\begin{tabular}{lccrr}
\toprule
\textbf{Model} & 
\makecell{\textbf{Classifier} \\ \textbf{accuracy} $\uparrow$} &
\makecell{\textbf{Sec. per} \\ \textbf{epoch}} &
\makecell{\textbf{Best} \\ \textbf{epoch}} &
\makecell{\textbf{Parameter} \\ \textbf{count}} \\
\midrule
{\modelname} & \textbf{0.66 ± 0.14} & 0.23 ± 0.05 & 73 ± 172 & 164,868 \\
MARBLE & $\underaccent{\rule{\widthof{\text{0.00 $\pm$ 0.00}}}{1.1pt}}{0.65 \pm 0.12}$ & 1.41 ± 0.63 & 20 ± 57 & 1,443,004 \\
CEBRA & 0.57 ± 0.09 & 0.40 ± 0.04 & 685 ± 3040 & 11,171 \\
LFADS & 0.35 ± 0.08 & 0.44 ± 0.11 & 5 ± 34 & 1,068,800 \\
\bottomrule
\end{tabular}
\end{table}

\section{Limitations}\label{sec:limitations}

The main limitations of our method are:
\vspace{-1em}
\begin{enumerate}
    \item Because VDWs depend on local PCA estimates of tangent space bases, their reliability may degrade when neighborhoods are too small or sparse to support accurate estimation of local geometry. We discuss mitigation techniques for such settings in Appendix \ref{app:convergence_remarks}.
    \item While VDWs are typically highly parameter efficient, they may be sensitive to key hyperparameters with certain data sets. For instance,the tuning choice of diffusion scales employed to construct a filter bank of VDWs, aiming to capture key signal bands in the data, is often a nontrivial but crucial step. Moreover, if the graph structure is not given, the scales may also need to be tuned in tandem with graph-construction parameters such as the value $k$ in a $k$-NN or  the edge-weighting kernel, if applicable.
    \item Our method is motivated by the manifold hypothesis, i.e., the belief  that data our lies approximately on a smooth, low-dimensional manifold. If this assumption is violated (due to noise or lack of manifold structure, etc.), our method has a less principled derivation and may not perform well.
\end{enumerate}

\section{Conclusion}\label{sec: conclusion}
We have introduced a novel version of the diffusion wavelets for vector-valued signals, and  
proved theoretical results demonstrating their frame properties and rotational equivariance. Empirically, we have shown that these wavelets may be  incorporated into geometric graph neural networks and effectively applied to synthetic point cloud data, as well as real-world data sets derived from wind field and neural activity measurements.

\FloatBarrier

\section*{Acknowledgments}

DJ, MP and SK were funded in part by NSF-DMS-2327211; AS by NSF-DMS-2136090 and NSF-DMS-2408912; MP by NSF-OIA-2242769; DN by NSF-DMS-2408912. SK was also funded in part by NSF Career Grant 2047856.

\section*{Impact Statement}

This paper presents work whose goal is to advance the field of Machine
Learning. There are many potential societal consequences of our work, none
which we feel must be specifically highlighted here.

\bibliography{main}
\bibliographystyle{icml2026}

\newpage
\appendix
\onecolumn

\section{Proofs of Main Theorems}
\label{app: proofs}

In this section, we will provide the proofs of Theorems \ref{thm: Frame vector wavelets}, \ref{thm:wavelet-equivariance}, and \ref{thm: equivariant scattering}. Additionally we recall that, as noted in Section \ref{sec: vector scattering}, in the scalar-valued signal case, \citet{tong2024learnable} showed that the scattering transform could be extended to include non-dyadic diffusion wavelets of the form $\mathbf{P}^{t_{j}}-\mathbf{P}^{t_{j+1}}$ where $0=t_0<t_1<t_2\ldots$ is a generic sequence of increasing integers and that vector diffusion wavelets may be readily extended to non-dyadic scales.
We note that it is straightforward to extend all of our theoretical analysis to wavelets and scattering transforms constructed in this way.

\subsection{The Proof of Theorem \ref{thm: Frame vector wavelets}}\label{app proof of new frame}
\begin{proof}
Let $\mathbf{Q}'$ be the $D\times D$ matrix defined in the same manner as $\mathbf{Q}$, but with the $D\times D$ identity matrix in place of $\mathcal{O}_{i,j}$, i.e.,
\begin{align*}
 \mathbf{Q}'[i,j] = \mathbf{P}[i,j] \mathbf{I}. 
\end{align*}

Let  $\widetilde{\mathcal{W}}_J' = \{\widetilde{\mathbf{\Psi}}'_j\}_{j=0}^ J\cup\{\widetilde{\mathbf{\Phi}}'_J\}$ be analogous to $\widetilde{\mathcal{W}}_J$, but with $\mathbf{Q}'$ in place of $\mathbf{Q}$, i.e., $\widetilde{\mathbf{\Psi}}'_0 = \mathbf{I} - \mathbf{Q}'$,
\begin{align*}\widetilde{\mathbf{\Psi}}'_j &= (\mathbf{Q}')^{2^{j-1}} - (\mathbf{Q}')^{2^{j}}
,\quad 1\leq j\leq J,
\end{align*}
and $\widetilde{\mathbf{\Phi}}'_J = (\mathbf{Q}')^{2^J}$.

The following lemma shows that a result similar to Theorem \ref{thm: Frame vector wavelets} holds for $\widetilde{\mathcal{W}}_J'$.
\begin{lemma}\label{lem: simple vector wavelets}
For all $\mathbf{w}\in\mathbb{R}^{nD}$, we have 
 \begin{equation*}
c\:\frac{d_{\min}}{d_{\max}}\|\mathbf{w}\|_{2}^2\leq    \big\|\widetilde{\mathcal{W}}'_J\mathbf{w}\big\|^2_{2}\coloneqq \sum_{j=0}^{J}\big\|\widetilde{\boldsymbol{\Psi}}'_j\mathbf{w}\big\|^2_{2} + \big\|\widetilde{\boldsymbol{\Phi}}'_J\mathbf{w}\big\|^2_{2}  \leq  \frac{d_{\max}}{d_{\min}}\|\mathbf{w}\|^2_{2},
\end{equation*}
where $c>0$ is a universal constant.
\end{lemma}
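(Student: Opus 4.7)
The plan is to exploit the fact that $\mathbf{Q}'$ has a particularly simple tensor-product structure, which lets us reduce the vector-valued frame bound to the already-known scalar-valued frame bound applied $d$ times.

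First, I would observe that the block definition $\mathbf{Q}'[i,j] = \mathbf{P}[i,j] \mathbf{I}_d$ is exactly the Kronecker product $\mathbf{Q}' = \mathbf{P} \otimes \mathbf{I}_d$. Consequently $(\mathbf{Q}')^{k} = \mathbf{P}^{k} \otimes \mathbf{I}_d$ for every $k \ge 0$, and so each of the ``simplified'' wavelet filters factors through the scalar filter on the graph:
\begin{equation*}
\widetilde{\boldsymbol{\Psi}}'_0 = \boldsymbol{\Psi}_0 \otimes \mathbf{I}_d, \qquad \widetilde{\boldsymbol{\Psi}}'_j = \boldsymbol{\Psi}_j \otimes \mathbf{I}_d \ \ (1\le j\le J), \qquad \widetilde{\boldsymbol{\Phi}}'_J = \boldsymbol{\Phi}_J \otimes \mathbf{I}_d.
\end{equation*}

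Next, I would slice the vector-valued signal into its $d$ coordinate channels. Given $\mathbf{w}\in\mathbb{R}^{nd}$ written as $[\mathbf{w}[1]^\top,\ldots,\mathbf{w}[n]^\top]^\top$ with $\mathbf{w}[i]\in\mathbb{R}^d$, I define scalar signals $\mathbf{w}^{(k)}\in\mathbb{R}^n$ by $\mathbf{w}^{(k)}[i] = \mathbf{w}[i][k]$ for $k=1,\ldots,d$. The Kronecker structure above then gives, for each filter in $\widetilde{\mathcal{W}}'_J$,
\begin{equation*}
\big\|\widetilde{\boldsymbol{\Psi}}'_j\mathbf{w}\big\|_2^2 = \sum_{k=1}^d \big\|\boldsymbol{\Psi}_j\mathbf{w}^{(k)}\big\|_2^2, \qquad \big\|\widetilde{\boldsymbol{\Phi}}'_J\mathbf{w}\big\|_2^2 = \sum_{k=1}^d \big\|\boldsymbol{\Phi}_J\mathbf{w}^{(k)}\big\|_2^2,
\end{equation*}
together with the Pythagorean identity $\|\mathbf{w}\|_2^2 = \sum_{k=1}^d \|\mathbf{w}^{(k)}\|_2^2$. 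Summing the filter norms yields
\begin{equation*}
\big\|\widetilde{\mathcal{W}}'_J\mathbf{w}\big\|_2^2 = \sum_{k=1}^d \big\|\mathcal{W}_J\mathbf{w}^{(k)}\big\|_2^2,
\end{equation*}
so the vector-valued frame norm is literally the sum of $d$ scalar-valued frame norms.

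At this point I would invoke the scalar diffusion-wavelet frame bound established in \citet{gama:diffScatGraphs2018, perlmutter2019understanding}, namely that there is a universal constant $c>0$ with
\begin{equation*}
c\,\frac{d_{\min}}{d_{\max}}\|\mathbf{x}\|_2^2 \ \le\ \big\|\mathcal{W}_J\mathbf{x}\big\|_2^2 \ \le\ \frac{d_{\max}}{d_{\min}}\|\mathbf{x}\|_2^2
\end{equation*}
for every $\mathbf{x}\in\mathbb{R}^n$, and apply it to each $\mathbf{w}^{(k)}$. Summing the resulting inequalities over $k=1,\dots,d$ and using the Pythagorean identity produces exactly the two-sided bound in the statement. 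The argument is essentially a bookkeeping reduction; the only mild subtlety is keeping the block-versus-channel indexing of $\mathbf{w}$ consistent when justifying the Kronecker factorization, which I would make explicit by writing out a single block $\widetilde{\boldsymbol{\Psi}}'_j\mathbf{w}[i] = \sum_{j'}(\boldsymbol{\Psi}_j)[i,j']\,\mathbf{w}[j']$. No genuine obstacle arises, because the sign-flip and rotation issues that motivated $\mathcal{O}_{i,j}$ have been deliberately removed in $\mathbf{Q}'$.
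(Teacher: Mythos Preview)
Your proposal is correct and follows essentially the same route as the paper: recognize $\mathbf{Q}'=\mathbf{P}\otimes\mathbf{I}_d$, decouple $\mathbf{w}$ into its $d$ coordinate channels so that the vector frame norm becomes a sum of $d$ scalar frame norms, and apply the scalar bound channel-by-channel. The only cosmetic difference is that the paper introduces an explicit permutation matrix $\boldsymbol{\Pi}$ to reorder the node-blocked vector into coordinate-blocked form (and also records the unweighted $\ell^2$ scalar frame bound as a separate corollary of the weighted-norm result in \citet{perlmutter2019understanding}), whereas you invoke the Kronecker structure and the coordinate slicing directly.
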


We  also need the following lemma, which provides a simplified expression for $\mathbf{Q}^m$.

\begin{lemma}\label{lem: powers of Q}
For all $m\geq 0$, we may write $\mathbf{Q}^m$ in block form as
\begin{align*}
    \mathbf{Q}^m[i,j] = 
        \mathbf{P}^m[i,j] \mathcal{O}_{i,j},
\end{align*}
where $\mathbf{P}^m[i,j] \in\mathbb{R}$ is the $i,j$-th entry of $\mathbf{P}^m$.
\end{lemma}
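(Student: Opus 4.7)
The plan is to prove Lemma \ref{lem: powers of Q} by induction on $m$, with the essential ingredient being a ``cocycle'' identity for the local-coordinate shifts, namely $\mathcal{O}_{i,k}\mathcal{O}_{k,j} = \mathcal{O}_{i,j}$. This identity follows immediately from the definition $\mathcal{O}_{i,j} = \mathbf{U}_i\mathbf{U}_j^\top$ together with the fact that each $\mathbf{U}_k$ is orthogonal, so that $\mathbf{U}_k^\top\mathbf{U}_k = \mathbf{I}_d$. I would record this as a preliminary observation before starting the induction, since it is the only nontrivial algebraic fact used in the argument.

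For the base case $m=0$, I would note that $\mathbf{Q}^0 = \mathbf{I}_{nd}$, and that on the right-hand side $\mathbf{P}^0[i,j] = \delta_{ij}$ while $\mathcal{O}_{i,i} = \mathbf{U}_i\mathbf{U}_i^\top = \mathbf{I}_d$. Thus both sides agree block-by-block. (The $m=1$ case is just the definition of $\mathbf{Q}$.)

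For the inductive step, assume $\mathbf{Q}^m[i,j] = \mathbf{P}^m[i,j]\,\mathcal{O}_{i,j}$. Then block matrix multiplication, the definition of $\mathbf{Q}$, and the cocycle identity give
\begin{align*}
\mathbf{Q}^{m+1}[i,j]
&= \sum_{k=1}^n \mathbf{Q}^m[i,k]\,\mathbf{Q}[k,j]
= \sum_{k=1}^n \mathbf{P}^m[i,k]\,\mathcal{O}_{i,k}\,\mathbf{P}[k,j]\,\mathcal{O}_{k,j} \\
&= \sum_{k=1}^n \mathbf{P}^m[i,k]\,\mathbf{P}[k,j]\,\mathcal{O}_{i,k}\mathcal{O}_{k,j}
= \left(\sum_{k=1}^n \mathbf{P}^m[i,k]\,\mathbf{P}[k,j]\right)\mathcal{O}_{i,j}
= \mathbf{P}^{m+1}[i,j]\,\mathcal{O}_{i,j},
\end{align*}
using that the scalars $\mathbf{P}^m[i,k]$ and $\mathbf{P}[k,j]$ commute past the matrices $\mathcal{O}_{i,k}$ and $\mathcal{O}_{k,j}$. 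This closes the induction.

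There is no real obstacle here; the ``hard'' step, if any, is simply recognizing that orthogonality of each $\mathbf{U}_k$ turns the composition $\mathbf{U}_i\mathbf{U}_k^\top\mathbf{U}_k\mathbf{U}_j^\top$ into $\mathbf{U}_i\mathbf{U}_j^\top$, which collapses the sum over $k$ from a sum of matrix products into a scalar sum that reproduces $\mathbf{P}^{m+1}[i,j]$. One minor point worth flagging in the written proof is that the sign-flipping convention used to make each $\mathcal{O}_{i,j}$ well-defined (Appendix \ref{sec: sign ambiguity}) must be consistent across all pairs, so that the cocycle identity holds literally rather than up to a sign; this is guaranteed by the construction referenced in the main text.
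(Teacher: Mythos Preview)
Your proof is correct and follows essentially the same approach as the paper: induction on $m$, with the inductive step reducing to the identity $\mathcal{O}_{i,k}\mathcal{O}_{k,j}=\mathbf{U}_i\mathbf{U}_k^\top\mathbf{U}_k\mathbf{U}_j^\top=\mathcal{O}_{i,j}$ via orthogonality of $\mathbf{U}_k$. The only cosmetic difference is that you isolate this ``cocycle'' identity up front, whereas the paper expands it inline during the inductive computation; your added remark about the sign-flipping convention is a reasonable caveat that the paper's own proof does not raise.
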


 For proofs of Lemma \ref{lem: simple vector wavelets} and \ref{lem: powers of Q}, please see Appendices  \ref{sec: proof of Lemma simple vector wavelets} and \ref{sec: proof of Lemma powers of Q}. 
 
 Now, let $\mathbf{w}=[\mathbf{w}[1]^\top,\ldots,\mathbf{w}[n]^\top]^\top\in\mathbb{R}^{nd}$ be a vector-valued signal (so that $\mathbf{w}[k]=\mathbf{w}(v_k)\in\mathbb{R}^D$). Define $\mathbf{y}\in\mathbb{R}^{nD}$ by
$\mathbf{y}=[\mathbf{y}[1]^\top,\ldots,\mathbf{y}[n]^\top]$, where
$$
\mathbf{y}[k]=\mathbf{U}_{k}\mathbf{w}[k].
$$
Note that since $\mathbf{U}_k$ is unitary, we have $\|\mathbf{y}[k]\|_2=\|\mathbf{w}[k]\|_2$ for all $k$, which further implies that $\|\mathbf{y}\|_2=\|\mathbf{w}\|_2$.
Next observe that by Lemma \ref{lem: powers of Q} we have, for $1\leq j\leq J$, 
$$
\widetilde{\boldsymbol{\Psi}}_j[i,k]=\mathbf{Q}^{2^{j-1}}[i,k]-\mathbf{Q}^{2^{j}}[i,k]=\mathbf{P}^{2^{j-1}}[i,k]\mathcal{O}_{i,k}-\mathbf{P}^{2^{j}}[i,k]\mathcal{O}_{i,k}=\widetilde{\boldsymbol{\Psi}}'_j[i,k]\mathcal{O}_{i,k},
$$
where in the final equality we use the fact that $\widetilde{\boldsymbol{\Psi}}'_j[i,k]=\left(\mathbf{P}^{2^{j-1}}[i,k]-\mathbf{P}^{2^{j}}[i,k]\right)\mathbf{I}$.
Similarly, we have $\widetilde{\boldsymbol{\Psi}}_0[i,k]=\widetilde{\boldsymbol{\Psi}}'_0[i,k]\mathcal{O}_{i,k}$ and $\widetilde{\boldsymbol{\Phi}}_J[i,k]=\widetilde{\boldsymbol{\Phi}}'_J[i,k]\mathcal{O}_{i,k}$.
Therefore, for all $0\leq j\leq J$, we have
\begin{align*}
(\widetilde{\boldsymbol{\Psi}}_j\mathbf{y})[i]&=\sum_{k=1}^n\widetilde{\boldsymbol{\Psi}}_j[i,k]\mathbf{y}[k]\\
&=\sum_{k=1}^n\widetilde{\boldsymbol{\Psi}}'_j[i,k]\mathcal{O}_{i,k}\mathbf{U}_k\mathbf{w}[k]\\
&=\sum_{k=1}^n\widetilde{\boldsymbol{\Psi}}'_j[i,k]\mathbf{U}_i\mathbf{U}_k^\top\mathbf{U}_k\mathbf{w}[k]\\
&=\mathbf{U}_i\sum_{k=1}^n\widetilde{\boldsymbol{\Psi}}'_j[i,k]\mathbf{w}[k]\\
&=\mathbf{U}_i((\boldsymbol{\Psi}'_j\mathbf{w})[i]),
\end{align*}
and likewise $(\widetilde{\boldsymbol{\Phi}}_J\mathbf{y})[i]=\mathbf{U}_i((\widetilde{\boldsymbol{\Phi}}'_J\mathbf{w})[i])$.
Since $\mathbf{U}_i$ is unitary, this implies that 
\begin{equation}\label{eqn: same frame size}
\sum_{j=0}^{J}\big\|\widetilde{\boldsymbol{\Psi}}'_j\mathbf{y}\big\|^2_{2} + \big\|\widetilde{\boldsymbol{\Phi}}'_J\mathbf{y}\big\|^2_{2}=\sum_{j=0}^{J}\big\|\widetilde{\boldsymbol{\Psi}}_j\mathbf{w}\big\|^2_{2} + \big\|\widetilde{\boldsymbol{\Phi}}_J\mathbf{w}\big\|^2_{2}.
\end{equation}
However, by Lemma \ref{lem: simple vector wavelets}, we have 
\begin{align*}
c\:\frac{d_{\min}}{d_{\max}}\|\mathbf{w}\|_{2}^2&=c\:\frac{d_{\min}}{d_{\max}}\|\mathbf{y}\|_{2}^2\\
&\leq \sum_{j=0}^{J}\big\|\widetilde{\boldsymbol{\Psi}}'_j\mathbf{y}\big\|^2_{2} + \big\|\widetilde{\boldsymbol{\Phi}}'_J\mathbf{y}\big\|^2_{2} 
\\
&\leq \frac{d_{\max}}{d_{\min}}\|\mathbf{y}\|^2_{2}\\
&=\frac{d_{\max}}{d_{\min}}\|\mathbf{w}\|^2_{2}.\end{align*}
Combining this with \eqref{eqn: same frame size} completes the proof.

\end{proof}

\subsection{The Proof of Theorem \ref{thm:wavelet-equivariance}}\label{app: proof equivariant Wavelets}

To prove Theorem \ref{thm:wavelet-equivariance}, we need the following lemma, which establishes the equivariance of the powered vector diffusion matrix $\mathbf{Q}^m$, which is also  illustrated by Figure \ref{fig:2d_Q_equivar}. For a proof, please see Appendix \ref{sec: powers proof}.

\begin{figure*}[hbt!]
    \begin{center}
    \includegraphics[width=1.0\textwidth]{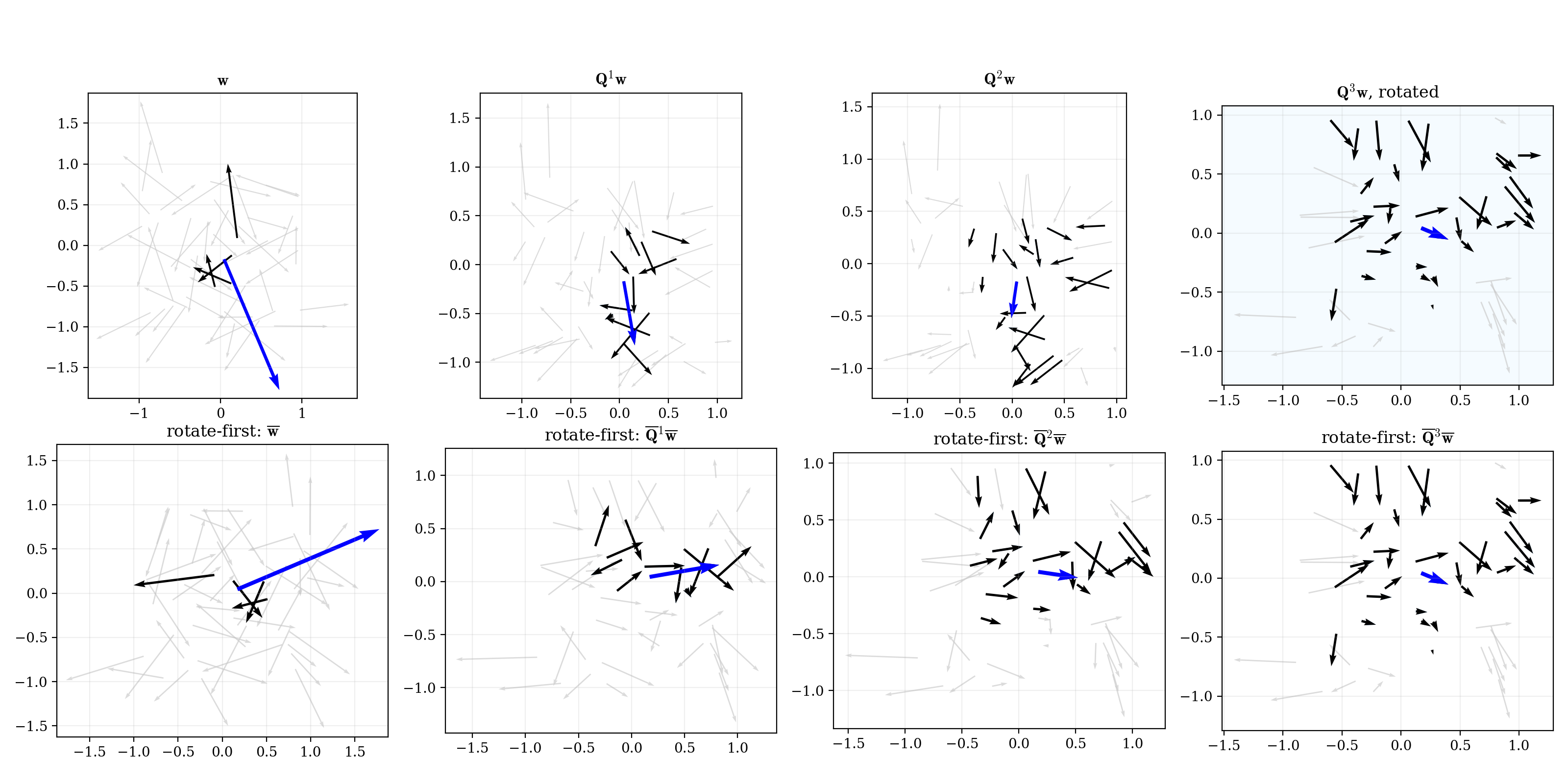}
    \caption{Illustration of rotational equivariance of $\mathbf{Q}^m$, applied to a 2D vector field. The vector field was generated with random uniform sampling of points \(x_i \sim \mathcal{U}([-1,1]^2)\), angles \(\theta_i \sim \mathcal{U}[0,2\pi)\), and magnitudes \(m_i \sim \mathcal{U}[0.4, 1.0]\) such that \(w_i = m_i [\cos \theta_i, \; \sin \theta_i]\). The central vector (the magnitude of which was made to be the largest for illustrative purposes) is colored blue. The top row shows $\mathbf{Q}^m$ applied to vectors in an unrotated vector field, where we first used the sample points $x_i$ to construct a $k$-NN graph, $k = 3$. The bottom row shows the same system, rotated 90 degrees counterclockwise, then diffused by $\overline{\mathbf{Q}}$. The black vectors highlight those involved in the next diffusion step, specifically, the vectors from which the central vector will receive (indirect) diffusion messages: first, its (symmetric) $k$-nearest neighbor vectors, then neighbors of neighbors, and so on. After three diffusion steps, the top (unrotated) system is rotated 90 degrees counterclockwise, like the bottom system was initially. This is shown in the top-right panel with a tinted background. Notably, this panel is identical to the panel immediately below it, thereby demonstrating the equivariance of $\mathbf{Q}^m$, since diffusing and then rotating yields the same result as rotating and then diffusing.}
    \label{fig:2d_Q_equivar}
    \end{center}
\end{figure*}

\begin{lemma}\label{lem: equivariant powers}
For any $m\geq 0$, and any vector-valued node feature $\mathbf{w}\in\mathbb{R}^{nD}$ we have 
$$
\overline{\mathbf{Q}}^m\overline{\mathbf{w}}=\overline{\mathbf{Q}}^m(\mathbf{R}\cdot\mathbf{w})=\mathbf{R} \cdot(
{\mathbf{Q}}^m{\mathbf{w}}).
$$
\end{lemma}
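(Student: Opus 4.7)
The plan is to reduce the claim to a block-entry-wise computation by combining Lemma \ref{lem: powers of Q} with a careful tracking of how the local bases $\mathbf{U}_i$, and hence the transition matrices $\mathcal{O}_{i,j}$, transform under a global rotation $\mathbf{R}$. Since rotations preserve both the edge set and pairwise distances, we have $\overline{\mathbf{A}} = \mathbf{A}$, $\overline{\mathbf{D}} = \mathbf{D}$, and thus $\overline{\mathbf{P}} = \mathbf{P}$; this already handles the scalar entries $\mathbf{P}^m[i,j]$ appearing in Lemma \ref{lem: powers of Q}. All that remains is to understand the effect of $\mathbf{R}$ on $\mathcal{O}_{i,j}$.

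First I would show that $\overline{\mathbf{U}}_i = \mathbf{R}\,\mathbf{U}_i$ for every $i$. The relative distance matrix in the rotated system satisfies $\overline{\mathbf{C}}_i = \mathbf{R}\mathbf{C}_i$, and the Gaussian kernel $K_\epsilon$ is rotation-invariant, so $\overline{\mathbf{D}}_i = \mathbf{D}_i$ and hence $\overline{\mathbf{B}}_i = \mathbf{R}\mathbf{B}_i$. Using the SVD $\mathbf{B}_i = \mathbf{U}_i \boldsymbol{\Sigma}_i \mathbf{V}_i^\top$, we get the candidate decomposition $\overline{\mathbf{B}}_i = (\mathbf{R}\mathbf{U}_i)\boldsymbol{\Sigma}_i \mathbf{V}_i^\top$, with the singular values unchanged. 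Because the singular values are assumed distinct, the left singular vectors are unique up to sign, so $\overline{\mathbf{U}}_i = \mathbf{R}\mathbf{U}_i$ up to column sign flips — and here is where I would explicitly invoke the sign-flipping convention from Appendix \ref{sec: sign ambiguity}, which is designed precisely to ensure that those sign choices are made consistently in the rotated and unrotated systems. This is the delicate step, and the main obstacle: the sign-fixing rule must be rotation-equivariant for the equality $\overline{\mathbf{U}}_i = \mathbf{R}\mathbf{U}_i$ to hold on the nose.

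Given this, a one-line computation yields
\begin{equation*}
\overline{\mathcal{O}}_{i,j} \;=\; \overline{\mathbf{U}}_i \overline{\mathbf{U}}_j^\top \;=\; \mathbf{R}\,\mathbf{U}_i \mathbf{U}_j^\top \mathbf{R}^\top \;=\; \mathbf{R}\,\mathcal{O}_{i,j}\,\mathbf{R}^\top.
\end{equation*}
Moreover the products $\overline{\mathcal{O}}_{i,j}\overline{\mathcal{O}}_{j,k} = \mathbf{R}\mathcal{O}_{i,k}\mathbf{R}^\top = \overline{\mathcal{O}}_{i,k}$ still telescope, so Lemma \ref{lem: powers of Q} applies in the rotated system and gives $\overline{\mathbf{Q}}^m[i,j] = \overline{\mathbf{P}}^m[i,j]\,\overline{\mathcal{O}}_{i,j} = \mathbf{P}^m[i,j]\,\mathbf{R}\mathcal{O}_{i,j}\mathbf{R}^\top$.

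Finally, I would compute block-entry $i$ of $\overline{\mathbf{Q}}^m \overline{\mathbf{w}}$ directly, using $\overline{\mathbf{w}}[k] = \mathbf{R}\mathbf{w}[k]$:
\begin{equation*}
(\overline{\mathbf{Q}}^m \overline{\mathbf{w}})[i] \;=\; \sum_{k=1}^{n} \mathbf{P}^m[i,k]\,\mathbf{R}\,\mathcal{O}_{i,k}\,\mathbf{R}^\top \mathbf{R}\,\mathbf{w}[k] \;=\; \mathbf{R}\sum_{k=1}^{n} \mathbf{P}^m[i,k]\,\mathcal{O}_{i,k}\,\mathbf{w}[k] \;=\; \mathbf{R}\,(\mathbf{Q}^m \mathbf{w})[i],
\end{equation*}
where the middle equality uses $\mathbf{R}^\top \mathbf{R} = \mathbf{I}$ and the last uses Lemma \ref{lem: powers of Q} in the original system. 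Since this holds for every block $i$, it is exactly the statement $\overline{\mathbf{Q}}^m \overline{\mathbf{w}} = \mathbf{R}\cdot(\mathbf{Q}^m \mathbf{w})$, which completes the proof.
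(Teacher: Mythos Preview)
Your proof is correct, and the derivation of $\overline{\mathcal{O}}_{i,j}=\mathbf{R}\mathcal{O}_{i,j}\mathbf{R}^\top$ matches the paper's auxiliary Proposition~\ref{lem: change of coordinate matrix} essentially line for line. The difference is in how you handle the power $m$: the paper proceeds by induction on $m$, establishing the $m=1$ case via a block computation and then bootstrapping via $\overline{\mathbf{Q}}^{m+1}\overline{\mathbf{w}}=\overline{\mathbf{Q}}\,(\overline{\mathbf{Q}}^m\overline{\mathbf{w}})$, whereas you invoke Lemma~\ref{lem: powers of Q} in the rotated system to obtain $\overline{\mathbf{Q}}^m[i,j]=\mathbf{P}^m[i,j]\,\mathbf{R}\mathcal{O}_{i,j}\mathbf{R}^\top$ in one shot and then do a single block computation. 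Your route is slightly more economical, since Lemma~\ref{lem: powers of Q} already encapsulates the telescoping $\mathcal{O}_{i,k}\mathcal{O}_{k,j}=\mathcal{O}_{i,j}$ that the paper's induction is implicitly re-deriving; the paper's route, on the other hand, does not need to verify that Lemma~\ref{lem: powers of Q} applies in the rotated system. You are also more explicit than the paper about the sign-ambiguity issue in identifying $\overline{\mathbf{U}}_i$ with $\mathbf{R}\mathbf{U}_i$, which is a point the paper's proof of Proposition~\ref{lem: change of coordinate matrix} passes over silently.
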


\begin{proof}[The proof of Theorem \ref{thm:wavelet-equivariance}]
We first observe that the claim $\overline{\widetilde{\mathbf{\Phi}}}_J\overline{\mathbf{w}} =\mathbf{R} \cdot \widetilde{\mathbf{\Phi}}_j\mathbf{w}$
follows immediately from  Lemma \ref{lem: equivariant powers}, setting $m=2^J$. 

Now, fix $0\leq j\leq J$, and observe that we may write $\widetilde{\mathbf{\Psi}}_j=\mathbf{Q}^{t_1}-\mathbf{Q}^{t_2}$ where $t_1=0$, $t_2=1$ if $j=0$ and otherwise we have $t_1=2^{j-1}, t_2=2^{j}$.
Thus, using Lemma \ref{lem: equivariant powers}, we observe
\begin{align*}
\overline{\widetilde{\mathbf{\Psi}}}_j\overline{\mathbf{w}}&=(\overline{\mathbf{Q}}^{t_1}-\overline{\mathbf{Q}^{t_2}})\overline{\mathbf{w}}\\
&=\overline{\mathbf{Q}}^{t_1}\overline{\mathbf{w}}-\overline{\mathbf{Q}^{t_2}}\overline{\mathbf{w}}\\&=
\mathbf{R}\cdot (\mathbf{Q}^{t_1}\mathbf{w})-\mathbf{R}\cdot (\mathbf{Q}^{t_2}\mathbf{w})\\
&=\mathbf{R}\cdot ((\mathbf{Q}^{t_1}-\mathbf{Q}^{t_2})\mathbf{w})\\
&=\mathbf{R}\cdot (\widetilde{\mathbf{\Psi}}_j\mathbf{w}
),
\end{align*}
where in the final equality we use the fact that $\mathbf{R}\cdot (\mathbf{w}_1-\mathbf{w}_2)=\mathbf{R}\cdot \mathbf{w}_1-\mathbf{R}\cdot \mathbf{w}_2$ for any vector-valued node signals $\mathbf{w}_1$ and $\mathbf{w}_2$. 
\end{proof}

\begin{figure*}[hbt!]
    \begin{center}
    \includegraphics[width=1.0\textwidth]{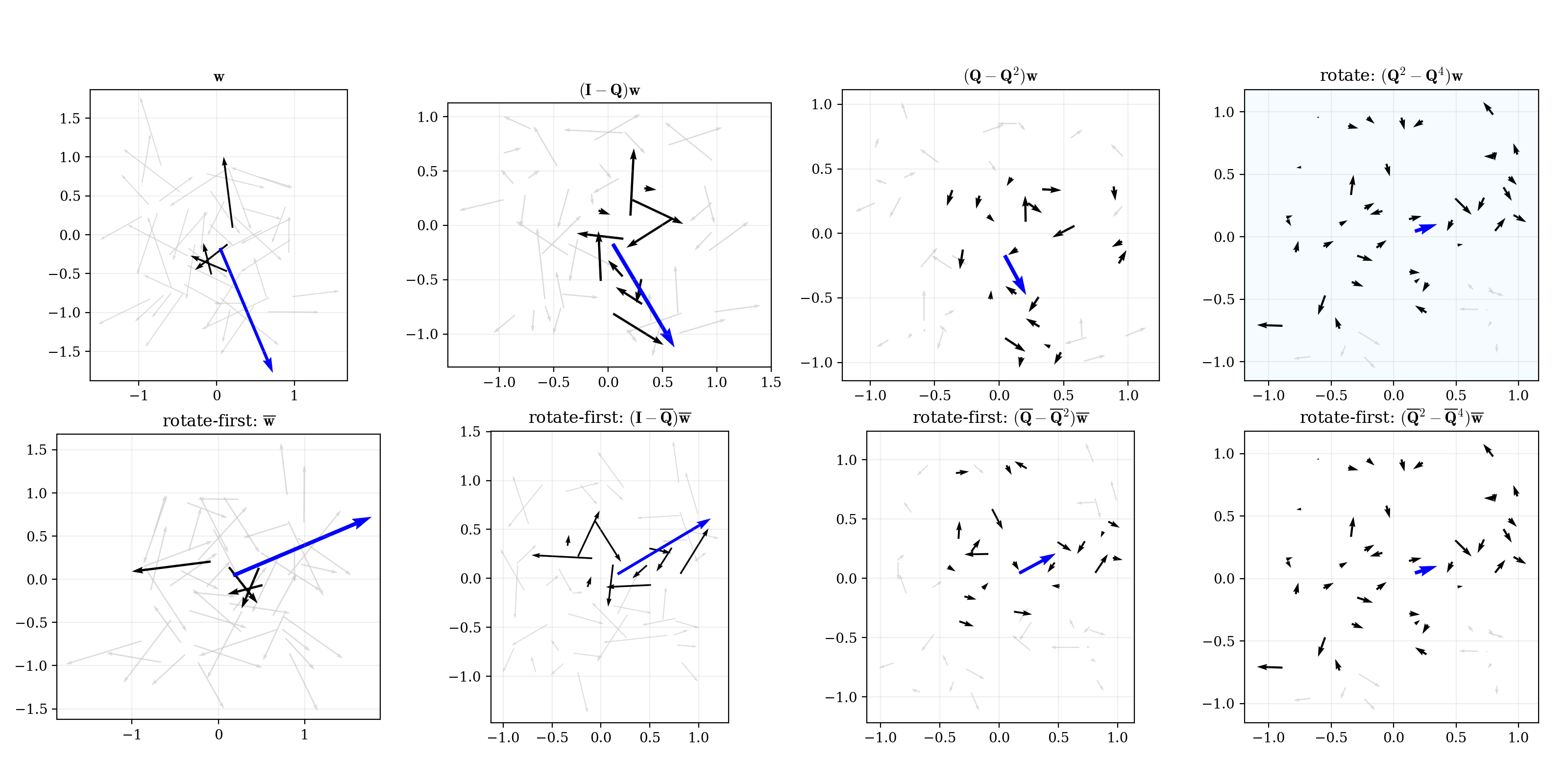}
    \caption{Illustration of rotational equivariance of vector diffusion wavelets applied to a vector field (defined as in Figure \ref{fig:2d_Q_equivar}). Here, the top row shows vector diffusion wavelets constructed from $\mathbf{Q}$ applied to vectors in the unrotated vector field; the bottom row shows the same system, rotated 90 degrees counterclockwise, then diffused using wavelets constructed from $\overline{\mathbf{Q}}$. The black vectors again highlight those from which the central vector will receive (indirect) messages in the next diffusion step. After three wavelet diffusion steps, the top (unrotated) system is rotated 90 degrees counterclockwise, like the bottom system was initially, as shown in the top-right panel with the tinted background. The fact that the diffused vector fields in the rightmost column are equivalent shows the rotational equivariance of the wavelets.}
    \label{fig:2d_wavelets_equivar}
    \end{center}
\end{figure*}

\subsection{The Proof of Theorem \ref{thm: equivariant scattering}}\label{app: proof of scattering equivariance}

Before proving Theorem \ref{thm: equivariant scattering} we briefly discuss the assumption that $\sigma$ commutes with rotations. To understand why this condition is necessary, observe that Theorem \ref{thm:wavelet-equivariance} implies that  
$$
\overline{\widetilde{\mathcal{U}}}[j_1]\mathbf{w}(v)=\sigma(\overline{\widetilde{\boldsymbol{{\Psi}}}}_{j_1}\mathbf{w}(v))=\sigma(\mathbf{R}\cdot \widetilde{\boldsymbol{{\Psi}}}_{j_1}\mathbf{w}(v)).
$$
Therefore, Theorem \ref{thm: equivariant scattering} will not hold without this assumption. We also note that we may readily construct activations $\sigma$ satisfying this assumption by defining $\sigma(\mathbf{w})(v)=\sigma_{\text{radial}}(\|\mathbf{w}(v)\|_2)\frac{\mathbf{w}(v)}{\|\mathbf{w}(v)\|_2}$ when $\mathbf{w}(v)\neq0$, and $\sigma(\mathbf{w})(v)=0$ when $\mathbf{w}(v)=0$, where $\sigma_{\text{radial}}$ is any real-valued function.

\begin{proof}

We argue by induction. For the case case, $m=1$, we use the assumption that $\mathbf{R}$ commutes with rotations to see that
\begin{align*}
\overline{\widetilde{\mathcal{U}}}[j_1]\overline{\mathbf{w}}=\sigma\left(\overline{\widetilde{\mathbf{\Psi}}}_{j_1}\overline{\mathbf{w}}\right)=\sigma\left(R\cdot \widetilde{\mathbf{\Psi}}_{j_1}\mathbf{w} \right)=R\cdot \sigma\left(\widetilde{\mathbf{\Psi}}_{j_1}\mathbf{w}\right)=R\cdot (\widetilde{\mathcal{U}}[j_1]\mathbf{w}).
\end{align*}

Now assume the result holds for some $m\geq 1$. Then, by the inductive hypothesis, we have 
\begin{align*}
\overline{\widetilde{\mathcal{U}}}[j_1,\ldots,j_m,j_{m+1}]\overline{\mathbf{w}}&=\overline{\widetilde{\mathcal{U}}}[j_{m+1}]\overline{\widetilde{\mathcal{U}}}[j_1,\ldots,j_m]\overline{\mathbf{w}}\\&=\overline{\widetilde{\mathcal{U}}}[j_{m+1}](\mathbf{R}\cdot\widetilde{\mathcal{U}}[j_1,j_2,\ldots,j_m]\mathbf{w})\\
&=\mathbf{R}\cdot(\overline{\widetilde{\mathcal{U}}}[j_{m+1}]\widetilde{\mathcal{U}}[j_1,j_2,\ldots,j_m]\mathbf{w})\\
&=\mathbf{R}(\widetilde{\mathcal{U}}[j_1,j_2,\ldots,j_m,j_{m+1}]\mathbf{w}).
\end{align*}

\end{proof}

\section{Proofs of auxiliary lemmas}

\subsection{The Proof of Lemma \ref{lem: simple vector wavelets}}\label{sec: proof of Lemma simple vector wavelets}

We first recall the following result from \citet{perlmutter2019understanding} which shows that the diffusion wavelets $\mathcal{W}_J$ are a non-expansive frame on the weighted inner product 
space defined by $\langle \mathbf{x}_1,\mathbf{x}_2\rangle_{\mathbf{D}^{1/2}}=\langle\mathbf{D}^{1/2}\mathbf{x}_1,\mathbf{D}^{1/2}\mathbf{x}_2\rangle_2$, with corresponding norm $\|\mathbf{x}\|_{\mathbf{D}^{1/2}}=\|\mathbf{D}^{1/2}\mathbf{x}\|_2$. (See also Proposition 4.1 of \citet{gama:diffScatGraphs2018} and Theorem 1 of \citet{tong2024learnable}.) The key to the proof of Proposition \ref{cor: unweighted L2 space} is to combine this result with the inequality \eqref{eqn: compare norms}, stated below, that relates this weighted norm to the unweighted $\ell^2$ norm. 

\begin{proposition}[Proposition 2.2 of \citet{perlmutter2019understanding}]  \label{prop: nonexpansivewaveletframes old}
$\mathcal{W}_J$ is a nonexpansive frame, i.e., there exists a universal constant $c>0$, which in particular is independent of $J$, and the geometry of $G$ such that
 \begin{equation*}
c\|\mathbf{x}\|_{\mathbf{D}^{1/2}}^2\leq    \big\|\mathcal{W}_J\mathbf{x}\big\|^2_{\mathbf{D}^{1/2}}\coloneqq \sum_{j=0}^{J}\big\|\boldsymbol{\Psi}_j\mathbf{x}\big\|^2_{\mathbf{D}^{1/2}} + \big\|\boldsymbol{\Phi}_J\mathbf{x}\big\|^2_{\mathbf{D}^{1/2}}  \leq  \|\mathbf{x}\|^2_{\mathbf{D}^{1/2}} \:\:\: \text{for all } \mathbf{x}\in\mathbb{R}^n.
\end{equation*}
\end{proposition}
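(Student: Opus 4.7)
The plan is to exploit the fact that $\mathbf{Q}'$ decouples across the $d$ coordinates of each block. Concretely, since $\mathbf{Q}'[i,j] = \mathbf{P}[i,j]\mathbf{I}_d$, we have $\mathbf{Q}' = \mathbf{P}\otimes \mathbf{I}_d$ under the identification $\mathbb{R}^{nd}\cong \mathbb{R}^n\otimes \mathbb{R}^d$. I would reshape $\mathbf{w}\in\mathbb{R}^{nd}$ as a matrix $\mathbf{W}\in\mathbb{R}^{n\times d}$ whose $i$-th row is $\mathbf{w}[i]^\top$, and denote its $k$-th column by $\mathbf{x}_k\in\mathbb{R}^n$. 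A direct calculation then shows that $\mathbf{Q}'\mathbf{w}$ corresponds to the matrix $\mathbf{P}\mathbf{W}$, and inductively $(\mathbf{Q}')^m\mathbf{w}$ corresponds to $\mathbf{P}^m\mathbf{W}$. Consequently, $\widetilde{\boldsymbol{\Psi}}'_j\mathbf{w}$ and $\widetilde{\boldsymbol{\Phi}}'_J\mathbf{w}$ correspond to $\boldsymbol{\Psi}_j\mathbf{W}$ and $\boldsymbol{\Phi}_J\mathbf{W}$ respectively, i.e.\ the scalar wavelets applied column-wise.

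Under the reshaping, the unweighted $\ell^2$ norm is precisely the Frobenius norm, so $\|\mathbf{w}\|_2^2=\sum_{k=1}^d \|\mathbf{x}_k\|_2^2$ and
\begin{equation*}
\big\|\widetilde{\mathcal{W}}'_J\mathbf{w}\big\|_2^2 \;=\; \sum_{k=1}^d \Big( \sum_{j=0}^J \|\boldsymbol{\Psi}_j\mathbf{x}_k\|_2^2 + \|\boldsymbol{\Phi}_J\mathbf{x}_k\|_2^2\Big) \;=\; \sum_{k=1}^d \|\mathcal{W}_J\mathbf{x}_k\|_2^2.
\end{equation*}
This reduces the claim to proving $\frac{c\,d_{\min}}{d_{\max}}\|\mathbf{x}\|_2^2 \leq \|\mathcal{W}_J\mathbf{x}\|_2^2 \leq \frac{d_{\max}}{d_{\min}}\|\mathbf{x}\|_2^2$ for an arbitrary scalar signal $\mathbf{x}\in\mathbb{R}^n$, after which I would sum over $k=1,\dots,d$ to conclude.

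For the per-column bound, I would invoke Proposition \ref{prop: nonexpansivewaveletframes old}, which gives $c\|\mathbf{x}\|_{\mathbf{D}^{1/2}}^2 \leq \|\mathcal{W}_J\mathbf{x}\|_{\mathbf{D}^{1/2}}^2 \leq \|\mathbf{x}\|_{\mathbf{D}^{1/2}}^2$, and then convert between the weighted and unweighted norms using the elementary inequality $d_{\min}\|\mathbf{y}\|_2^2 \leq \|\mathbf{y}\|_{\mathbf{D}^{1/2}}^2 \leq d_{\max}\|\mathbf{y}\|_2^2$ (valid for every $\mathbf{y}\in\mathbb{R}^n$). Applying this with $\mathbf{y}=\mathbf{x}$ on one side and $\mathbf{y}=\boldsymbol{\Psi}_j\mathbf{x}$ or $\boldsymbol{\Phi}_J\mathbf{x}$ on the other introduces precisely the ratio $d_{\min}/d_{\max}$ on the lower bound and its reciprocal on the upper bound.

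There is no genuine obstacle here; the content of the lemma is entirely inherited from the scalar result. The only thing to be careful about is the bookkeeping of which factor of $d_{\min}/d_{\max}$ arises on which side, since applying the crude equivalence of the two norms is lossy and is what forces the appearance of this ratio in the statement, rather than a cleaner universal constant as in Proposition \ref{prop: nonexpansivewaveletframes old}.
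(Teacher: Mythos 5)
Your argument is not a proof of the statement you were given. Proposition \ref{prop: nonexpansivewaveletframes old} concerns the \emph{scalar} diffusion wavelets $\mathcal{W}_J=\{\boldsymbol{\Psi}_j\}_{j=0}^J\cup\{\boldsymbol{\Phi}_J\}$ acting on $\mathbf{x}\in\mathbb{R}^n$, with frame bounds stated in the \emph{weighted} norm $\|\cdot\|_{\mathbf{D}^{1/2}}$ and a universal constant $c$ independent of the graph. What you wrote is a proof of Lemma \ref{lem: simple vector wavelets}, the unweighted frame bound for the block-decoupled vector wavelets built from $\mathbf{Q}'=\mathbf{P}\otimes\mathbf{I}$. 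Worse, your final step \emph{invokes} Proposition \ref{prop: nonexpansivewaveletframes old} as a known fact, so read as a proof of that proposition your argument is circular. Note also that the paper never proves this proposition at all: it is imported verbatim as Proposition~2.2 of \citet{perlmutter2019understanding}, so there is no in-paper proof to reconstruct, and none of the paper's lemmas may be assumed when proving it.

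To actually prove Proposition \ref{prop: nonexpansivewaveletframes old} you need a self-contained spectral argument. The key observation is that $\mathbf{P}=\tfrac12(\mathbf{I}+\mathbf{D}^{-1}\mathbf{A})$ is self-adjoint with spectrum in $[0,1]$ with respect to $\langle\cdot,\cdot\rangle_{\mathbf{D}^{1/2}}$ (equivalently, $\mathbf{D}^{1/2}\mathbf{P}\mathbf{D}^{-1/2}$ is symmetric with eigenvalues in $[0,1]$). Diagonalizing in that inner product reduces the frame ratio to bounding the scalar function $g_J(\lambda)=(1-\lambda)^2+\sum_{j=1}^J(\lambda^{2^{j-1}}-\lambda^{2^j})^2+\lambda^{2^{J+1}}$ on $[0,1]$. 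The upper bound $g_J\le 1$ follows from the telescoping identity $(1-\lambda)+\sum_{j=1}^J(\lambda^{2^{j-1}}-\lambda^{2^j})+\lambda^{2^J}=1$ together with $\sum a_i^2\le(\sum a_i)^2$ for nonnegative $a_i$; the uniform-in-$J$ lower bound $g_J\ge c>0$ is the nontrivial part and needs its own estimate near $\lambda=1$. As an aside, your reshaping argument is in substance the same decomposition the paper uses (permutation to $\mathbf{I}\otimes\mathbf{P}$ versus column-wise action of $\mathbf{P}$) and would be a correct proof of Lemma \ref{lem: simple vector wavelets} — but that is a different statement.
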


We use Proposition \ref{prop: nonexpansivewaveletframes old} to prove the following corollary, which analyzes the frame bounds of the diffusion wavelets with respect to the unweighted $\ell^2$ norm.

\begin{corollary}
      \label{cor: unweighted L2 space}
For all $\mathbf{x}\in\mathbb{R}^n$, we have 
 \begin{equation*}
c\:\frac{d_{\min}}{d_{\max}}\|\mathbf{x}\|_{2}^2\leq    \big\|\mathcal{W}_J\mathbf{x}\big\|^2_{2}\coloneqq \sum_{j=0}^{J}\big\|\boldsymbol{\Psi}_j\mathbf{x}\big\|^2_{2} + \big\|\boldsymbol{\Phi}_J\mathbf{x}\big\|^2_{2}  \leq  \frac{d_{\max}}{d_{\min}}\|\mathbf{x}\|^2_{2},
\end{equation*}
where $d_{\min}$ and $d_{\max}$ denote the minimal and maximal vertex degrees and $c >0$ is a universal constant.
\end{corollary}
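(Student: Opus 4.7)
The plan is to derive the unweighted $\ell^2$ frame bounds from the weighted frame bounds of Proposition \ref{prop: nonexpansivewaveletframes old} by a straightforward norm-comparison sandwich. The weighted norm is $\|\mathbf{x}\|_{\mathbf{D}^{1/2}}^2 = \sum_{i=1}^n d_i |\mathbf{x}[i]|^2$, where $d_i$ is the degree of $v_i$, so
\begin{equation*}
d_{\min}\|\mathbf{x}\|_2^2 \;\leq\; \|\mathbf{x}\|_{\mathbf{D}^{1/2}}^2 \;\leq\; d_{\max}\|\mathbf{x}\|_2^2
\end{equation*}
for every $\mathbf{x}\in\mathbb{R}^n$. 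Since this holds pointwise for any vector, it applies equally well to each $\boldsymbol{\Psi}_j\mathbf{x}$ and to $\boldsymbol{\Phi}_J\mathbf{x}$, which is the key observation.

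For the upper bound, I would first use the left inequality above (applied to each $\boldsymbol{\Psi}_j\mathbf{x}$ and $\boldsymbol{\Phi}_J\mathbf{x}$) to obtain $\|\mathcal{W}_J\mathbf{x}\|_2^2 \leq \tfrac{1}{d_{\min}}\|\mathcal{W}_J\mathbf{x}\|_{\mathbf{D}^{1/2}}^2$, then apply the upper bound from Proposition \ref{prop: nonexpansivewaveletframes old} to get $\|\mathcal{W}_J\mathbf{x}\|_{\mathbf{D}^{1/2}}^2 \leq \|\mathbf{x}\|_{\mathbf{D}^{1/2}}^2$, and finally use the right inequality above on $\mathbf{x}$ itself to arrive at $\tfrac{d_{\max}}{d_{\min}}\|\mathbf{x}\|_2^2$. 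For the lower bound, I would run the analogous chain in the reverse direction: $\|\mathcal{W}_J\mathbf{x}\|_2^2 \geq \tfrac{1}{d_{\max}}\|\mathcal{W}_J\mathbf{x}\|_{\mathbf{D}^{1/2}}^2 \geq \tfrac{c}{d_{\max}}\|\mathbf{x}\|_{\mathbf{D}^{1/2}}^2 \geq \tfrac{c\,d_{\min}}{d_{\max}}\|\mathbf{x}\|_2^2$, with the middle step being Proposition \ref{prop: nonexpansivewaveletframes old}. The universal constant $c$ is inherited unchanged from that proposition.

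There is no real obstacle here; the entire content is the norm-equivalence $d_{\min}\|\cdot\|_2^2\leq\|\cdot\|_{\mathbf{D}^{1/2}}^2\leq d_{\max}\|\cdot\|_2^2$ composed with the already-established weighted frame bounds. The only thing to be slightly careful about is that the norm-equivalence must be applied to the \emph{output} of each filter (to convert $\|\mathcal{W}_J\mathbf{x}\|_{\mathbf{D}^{1/2}}^2$ to $\|\mathcal{W}_J\mathbf{x}\|_2^2$) using the direction of the inequality that produces a single global factor $d_{\min}^{-1}$ or $d_{\max}^{-1}$ out front of the whole sum, and separately to the \emph{input} $\mathbf{x}$ in the opposite direction. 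This keeps the constant universal (the same $c$ as in Proposition \ref{prop: nonexpansivewaveletframes old}) and exposes the expected ratio $d_{\max}/d_{\min}$.
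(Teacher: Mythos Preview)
Your proposal is correct and follows essentially the same approach as the paper: state the norm-equivalence $d_{\min}\|\mathbf{x}\|_2^2 \leq \|\mathbf{x}\|_{\mathbf{D}^{1/2}}^2 \leq d_{\max}\|\mathbf{x}\|_2^2$, apply it to each filtered signal to pass between $\|\mathcal{W}_J\mathbf{x}\|_2^2$ and $\|\mathcal{W}_J\mathbf{x}\|_{\mathbf{D}^{1/2}}^2$, invoke Proposition~\ref{prop: nonexpansivewaveletframes old}, and then apply the norm-equivalence once more to $\mathbf{x}$ itself. The chains of inequalities you wrote are exactly the paper's argument.
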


\begin{proof}
Let $\mathbf{x}\in\mathbb{R}^n$. 
It is straightforward to see that 
\begin{equation}\label{eqn: compare norms}
{d_{\min}}\|\mathbf{x}\|^2_2\leq \|\mathbf{x}\|^2_{\mathbf{D}^{1/2}}\leq {d_{\max}}\|\mathbf{x}\|^2_2.
\end{equation}
Therefore, by Proposition \ref{prop: nonexpansivewaveletframes old}, we have 
\begin{align*}
\sum_{j=0}^{J}\big\|\boldsymbol{\Psi}_j\mathbf{x}\big\|^2_{2} + \big\|\boldsymbol{\Phi}_J\mathbf{x}\big\|^2_{2}&\leq d_{\max}\left(\sum_{j=0}^{J}\big\|\boldsymbol{\Psi}_j\mathbf{x}\big\|^2_{\mathbf{D}^{1/2}} + \big\|\boldsymbol{\Phi}_J\mathbf{x}\big\|^2_{\mathbf{D}^{1/2}}\right)\\
&\leq d_{\max}\|\mathbf{x}\|^2_{\mathbf{D}^{1/2}}\\
&\leq \frac{d_{\max}}{d_{\min}}\|\mathbf{x}\|_2^2,
\end{align*}
which establishes the upper frame bound (i.e., the rightmost inequality). Similarly, to establish the lower frame bound, we have
\begin{align*}
\sum_{j=0}^{J}\big\|\boldsymbol{\Psi}_j\mathbf{x}\big\|^2_{2} + \big\|\boldsymbol{\Phi}_J\mathbf{x}\big\|^2_{2}&\geq 
d_{\min}\left(\sum_{j=0}^{J}\big\|\boldsymbol{\Psi}_j\mathbf{x}\big\|^2_{\mathbf{D}^{1/2}} + \big\|\boldsymbol{\Phi}_J\mathbf{x}\big\|^2_{\mathbf{D}^{1/2}}\right)\\
&\geq cd_{\min}\|\mathbf{x}\|^2_{\mathbf{D}^{1/2}}\\
&\geq c\:\frac{d_{\min}}{d_{\max}}\|\mathbf{x}\|_2^2.
\end{align*}\qedhere
\end{proof}

\begin{proof}[The proof of Lemma \ref{lem: simple vector wavelets}]
Let $\mathbf{w}:V\rightarrow\mathbb{R}^D$, written in vector form as 
\begin{align*}
\mathbf{w}&=[\mathbf{w}[1]^\top,\ldots,\mathbf{w}[n]^T]^\top
\\&=[\mathbf{w}[1][1],\ldots,\mathbf{w}[1][D],\mathbf{w}[2][1],\ldots,\mathbf{w}[2][D],\ldots,\mathbf{w}[n][1],\ldots,\mathbf{w}[n][D]]^\top.
\end{align*}

We first observe that we may write $\mathbf{Q}'=\mathbf{P}\otimes\mathbf{I}$, where $\mathbf{I}$ is the $D\times D$ identity matrix and $\otimes$ denotes the Kronecker product.
By the mixed-product property of the Kronecker product, this implies that for all $m\geq 0$, we have 
$$
(\mathbf{Q}')^m=\mathbf{P}^m\otimes\mathbf{I}.
$$
Therefore, by linearity, we have 
$$
\widetilde{\boldsymbol{\Psi}}'_j = \boldsymbol{\Psi}_j \otimes \mathbf{I},
$$
and similarly $\widetilde{\boldsymbol{\Phi}}'_J = \boldsymbol{\Phi}_J \otimes \mathbf{I}$.

Now, let $\boldsymbol{\Pi}\in\mathbb{R}^{nD\times nD}$ be the permutation matrix such that  applying $\boldsymbol{\Pi}$    to $\mathbf{w}$ yields
\begin{align*}
\boldsymbol{\Pi}\mathbf{w}
&=[\mathbf{w}[1][1],\ldots,\mathbf{w}[n][1],\mathbf{w}[1][2],\ldots,\mathbf{w}[n][2],\ldots,\mathbf{w}[1][D],\ldots,\mathbf{w}[n][D]]^\top\\
&=[\mathbf{w}[:][1]^\top,\mathbf{w}[:][1]^\top,\ldots,\mathbf{w}[:][D]^\top]^\top,
\end{align*}
where $\mathbf{w}[:][k]^\top=[\mathbf{w}[1][k],\ldots,\mathbf{w}[n][k]]^\top$ for all $1\leq k\leq d$.
Formally, let $\boldsymbol{\Pi}$ be the matrix corresponding to the permutation $\widetilde{\sigma}:\{1,2,3,\ldots,nd\}\rightarrow \{1,2,3,\ldots,nd\}$, $\widetilde{\sigma}(i)=n\cdot ((i-1)\mod d)+\lceil\frac{j}{d}\rceil$. To understand this permutation, note that we have reordered the entries of the vector $\mathbf{w}$ so that all entries corresponding to the first output dimension of the function $\mathbf{w}:V\rightarrow\mathbb{R}^d$ come first, then all of the entries corresponding to the second output dimension come next, etc.

Since $\mathbf{Q}'=\mathbf{P}\otimes\mathbf{I}$, applying $\boldsymbol{\Pi}$ to both the columns and the rows of $\mathbf{Q}'$ yields
$$
\boldsymbol{\Pi} \mathbf{Q}' \boldsymbol{\Pi}^\top=\mathbf{I}\otimes\mathbf{P}=\begin{pmatrix}
\mathbf{P}&\mathbf{0}&\mathbf{0}&\mathbf{0}&\mathbf{0}&\mathbf{0}\\
\mathbf{0}&\mathbf{P}&\mathbf{0}&\mathbf{0}&\mathbf{0}&\mathbf{0}\\
\ddots&\ddots&\ddots&\ddots&\ddots&\ddots\\
\ddots&\ddots&\ddots&\ddots&\ddots&\ddots\\
\mathbf{0}&\mathbf{0}&\mathbf{0}&\mathbf{0}&\mathbf{0}
&\mathbf{P}
\end{pmatrix}.
$$
Moreover, since $\boldsymbol{\Pi}^\top\boldsymbol{\Pi}=\mathbf{I}$, we see that 
$$
(\boldsymbol{\Pi} \mathbf{Q}'\boldsymbol{\Pi}^\top)^m=\boldsymbol{\Pi} (\mathbf{Q}')^m\boldsymbol{\Pi}^\top=\begin{pmatrix}
\mathbf{P}^m&\mathbf{0}&\mathbf{0}&\mathbf{0}&\mathbf{0}&\mathbf{0}\\
\mathbf{0}&\mathbf{P}^m&\mathbf{0}&\mathbf{0}&\mathbf{0}&\mathbf{0}\\
\ddots&\ddots&\ddots&\ddots&\ddots&\ddots\\
\ddots&\ddots&\ddots&\ddots&\ddots&\ddots\\
\mathbf{0}&\mathbf{0}&\mathbf{0}&\mathbf{0}&\mathbf{0}
&\mathbf{P}^m
\end{pmatrix}=\mathbf{I}\otimes \mathbf{P}^m.
$$
for all $m\geq 0$, and so $$
\boldsymbol{\Pi}\widetilde{\boldsymbol{\Psi}}'_j\boldsymbol{\Pi} = \mathbf{I}\otimes \boldsymbol{\Psi}_j=\begin{pmatrix}
\boldsymbol{\Psi}_j&\mathbf{0}&\mathbf{0}&\mathbf{0}&\mathbf{0}&\mathbf{0}\\
\mathbf{0}&\boldsymbol{\Psi}_j&\mathbf{0}&\mathbf{0}&\mathbf{0}&\mathbf{0}\\
\ddots&\ddots&\ddots&\ddots&\ddots&\ddots\\
\ddots&\ddots&\ddots&\ddots&\ddots&\ddots\\
\mathbf{0}&\mathbf{0}&\mathbf{0}&\mathbf{0}&\mathbf{0}
&\boldsymbol{\Psi}_j
\end{pmatrix},
$$
and similarly, $\boldsymbol{\Pi}\widetilde{\boldsymbol{\Phi}}'_J\boldsymbol{\Pi} = \mathbf{I}\otimes \boldsymbol{\Phi}_J$.
Therefore, we have 
\begin{equation*}
    \boldsymbol{\Pi} \widetilde{\boldsymbol{\Psi}}'_j\mathbf{w}=(    \boldsymbol{\Pi} \widetilde{\boldsymbol{\Psi}}'_j\boldsymbol{\Pi}^\top)(\boldsymbol{\Pi}\mathbf{w})=[\boldsymbol{\Psi}_j\mathbf{w}[:][1],\boldsymbol{\Psi}_j\mathbf{w}[:][2],\ldots,\boldsymbol{\Psi}_j\mathbf{w}][:[D]]^\top,
\end{equation*}
with an analogous equation for $\widetilde{\boldsymbol{\Phi}}'_J$. Thus,
\begin{align}
\sum_{j=0}^J \|\widetilde{\boldsymbol{\Psi}}'_j\mathbf{w}\|^2+\|\widetilde{\boldsymbol{\Phi}}'_J\mathbf{w}\|^2 &= 
\sum_{j=0}^J \|\boldsymbol{\Pi}\widetilde{\boldsymbol{\Psi}}'_j\mathbf{w}\|_2^2+\|\widetilde{\boldsymbol{\Phi}}'_J\mathbf{w}\|_2^2\nonumber\\
&=\sum_{j=0}^J\sum_{k=1}^D\|\boldsymbol{\Psi}_j\mathbf{w}[:][k]\|_2^2+\sum_{k=1}^D\|\boldsymbol{\Phi}_J\mathbf{w}[:][k]\|_2^2\nonumber\\
&=\sum_{k=1}^D\left(\sum_{j=0}^J\|\boldsymbol{\Psi}_j\mathbf{w}[:][k]\|_2^2+\|\boldsymbol{\Phi}_J\mathbf{w}[:][k]\|_2^2\right).\label{eqn: expanding out frame coeffs in terms of k}
\end{align}

By Corollary \ref{cor: unweighted L2 space}, we have 
$$
c \frac{d_{\min}}{d_{\max}}\|\mathbf{w}[:][k]\|^2_{2}\leq \sum_{j=0}^J\|\boldsymbol{\Psi}_j\mathbf{w}[:][k]\|_2^2+\|\boldsymbol{\Phi}_J\mathbf{w}[:][k]\|_2^2\leq \frac{d_{\max}}{d_{\min}}\|\mathbf{w}[:][k]\|^2_{2},
$$
and so, using \eqref{eqn: expanding out frame coeffs in terms of k} together with the fact that $\|\mathbf{w}\|^2_2=\sum_{k=1}^D\|\mathbf{w}[:][k]\|^2_{2}$, we have 
\begin{align*}
c \frac{d_{\min}}{d_{\max}}\|\mathbf{w}\|_2^2&=c \frac{d_{\min}}{d_{\max}}\sum_{k=1}^D\|\mathbf{w}[:][k]\|^2_{2}\\
&\leq \sum_{k=1}^D\left(\sum_{j=0}^J\|\boldsymbol{\Psi}_j\mathbf{w}[:][k]\|_2^2+\|\boldsymbol{\Phi}_J\mathbf{w}[:][k]\|_2^2\right)\\
&=\sum_{j=0}^J \|\widetilde{\boldsymbol{\Psi}}'_j\mathbf{w}\|^2+\|\widetilde{\boldsymbol{\Phi}}'_J\mathbf{w}\|^2\\
&= \sum_{k=1}^D\left(\sum_{j=0}^J\|\boldsymbol{\Psi}_j\mathbf{w}[:][k]\|_2^2+\|\boldsymbol{\Phi}_J\mathbf{w}[:][k]\|_2^2\right)\\
&\leq \sum_{k=1}^D\frac{d_{\max}}{d_{\min}}\|\mathbf{w}[:][k]\|^2_{2}\\
&=\frac{d_{\max}}{d_{\min}}\|\mathbf{w}\|^2_{2}.\qedhere
\end{align*}

\end{proof}

\subsection{The Proof of Lemma \ref{lem: powers of Q}}\label{sec: proof of Lemma powers of Q}

\begin{proof}

When $m=0$, this follows from the fact that for all $i$, $\mathcal{O}_{i,i}=\mathbf{U}_i\mathbf{U}_i^\top=\mathbf{I}$ since $\mathbf{U}_i$ is unitary, and so $\mathbf{Q}^0=\mathbf{I}=\mathbf{I}\:\mathbf{I}=\mathbf{P}^0\mathcal{O}_{i,i}$. 
The case $m=1$ follows directly from the definition of $\mathbf{Q}$.

Now, reasoning by induction, assume the result holds for $m$. Then, using block matrix multiplication, we have
\begin{align*}
\mathbf{Q}^{m+1}[i,j]&=\sum_{k=1}^n\mathbf{Q}^{m}[i,k]\mathbf{Q}[k,j]\\
&=\sum_{k=1}^n\mathbf{P}^m[i,k]\mathbf{P}[k,j]\mathcal{O}_{i,k}\mathcal{O}_{k,j}\\
&=\sum_{k=1}^n\mathbf{P}^{m}[i,k]\mathbf{P}[k,j]\mathbf{U}_{i}\mathbf{U}_{k}^\top\mathbf{U}_{k}\mathbf{U}_{j}^\top\\
&=\sum_{k=1}^n\mathbf{P}^{m}[i,k]\mathbf{P}[k,j]\mathbf{U}_{i}\mathbf{U}_{j}^\top\\
&=\sum_{k=1}^n\mathbf{P}^{m}[i,k]\mathbf{P}[k,j]\mathcal{O}_{i,j}\\
&=\mathbf{P}^{m+1}[i,j]\mathcal{O}_{i,j}.
\end{align*}\qedhere

\end{proof}

\subsection{The proof of Lemma \ref{lem: equivariant powers}}\label{sec: powers proof}

We first prove the following auxiliary lemma.
\begin{proposition}\label{lem: change of coordinate matrix}
$\overline{\mathcal{O}}_{i,j}$, the change of coordinate matrix  in the rotated coordinate system, satisfies
$$\overline{\mathcal{O}}_{i,j}=\mathbf{R}\mathcal{O}_{i,j}\mathbf{R}^\top.$$
Therefore, the corresponding vector-valued diffusion matrix $\overline{\mathbf{Q}}$ can be written in block form by
$$
\overline{\mathbf{Q}}[i,j]=\mathbf{P}[i,j]\mathbf{R}\mathcal{O}_{i,j}\mathbf{R}^\top.
$$
\end{proposition}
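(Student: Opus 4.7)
The plan is to chase the rotation through every step of the construction of $\mathcal{O}_{i,j}$ and then combine with invariance of $\mathbf{P}$.

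First, I would analyze how the raw geometric ingredients transform. Since the edge set is unchanged by assumption ($\overline{\mathbf{A}} = \mathbf{A}$), the neighborhood $\mathcal{N}_{\overline{v}_i}$ consists of $\overline{v}_{i_j} = \mathbf{R} v_{i_j}$ for the same indices $i_j$. Therefore the relative-distance matrix satisfies $\overline{\mathbf{C}}_i = \mathbf{R}\mathbf{C}_i$, because each column $\overline{v}_{i_j} - \overline{v}_i = \mathbf{R}(v_{i_j} - v_i)$. Moreover, because $\mathbf{R}$ is an isometry, the Gaussian kernel values are invariant, $K_\epsilon(\overline{v}_i,\overline{v}_{i_j}) = K_\epsilon(v_i,v_{i_j})$, which gives $\overline{\mathbf{D}}_i = \mathbf{D}_i$. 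Consequently $\overline{\mathbf{B}}_i = \overline{\mathbf{C}}_i \overline{\mathbf{D}}_i = \mathbf{R} \mathbf{B}_i$.

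Next I would deduce the effect on the SVD. Writing $\mathbf{B}_i = \mathbf{U}_i \boldsymbol{\Sigma}_i \mathbf{V}_i^\top$, we have $\overline{\mathbf{B}}_i = (\mathbf{R}\mathbf{U}_i)\boldsymbol{\Sigma}_i \mathbf{V}_i^\top$, and $\mathbf{R}\mathbf{U}_i$ is orthogonal. Since the singular values have multiplicity one by assumption, the SVD is unique up to the per-column sign flips discussed in Appendix \ref{sec: sign ambiguity}; I would invoke the fact (guaranteed by the sign-flipping scheme) that the consistently-chosen signs are preserved under rotation, so that $\overline{\mathbf{U}}_i = \mathbf{R}\mathbf{U}_i$. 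This is the one step where care is required; the rest is mechanical.

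Given this, the equality $\overline{\mathcal{O}}_{i,j} = \overline{\mathbf{U}}_i \overline{\mathbf{U}}_j^\top = \mathbf{R}\mathbf{U}_i \mathbf{U}_j^\top \mathbf{R}^\top = \mathbf{R}\mathcal{O}_{i,j}\mathbf{R}^\top$ is immediate. Finally, for the block formula for $\overline{\mathbf{Q}}$, I would observe that $\overline{\mathbf{A}} = \mathbf{A}$ implies $\overline{\mathbf{D}} = \mathbf{D}$ and hence $\overline{\mathbf{P}} = \mathbf{P}$; plugging this and the formula for $\overline{\mathcal{O}}_{i,j}$ into $\overline{\mathbf{Q}}[i,j] = \overline{\mathbf{P}}[i,j]\,\overline{\mathcal{O}}_{i,j}$ yields the claim. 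The main obstacle, as noted, is verifying that the sign-selection rule is rotation-equivariant; once that is in hand, the proposition reduces to tracking $\mathbf{R}$ through the definitions.
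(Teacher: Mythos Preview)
Your proposal is correct and follows essentially the same route as the paper's proof: show $\overline{\mathbf{B}}_i=\mathbf{R}\mathbf{B}_i$ via $\overline{\mathbf{C}}_i=\mathbf{R}\mathbf{C}_i$ and $\overline{\mathbf{D}}_i=\mathbf{D}_i$, deduce $\overline{\mathbf{U}}_i=\mathbf{R}\mathbf{U}_i$ from the SVD, and combine with $\overline{\mathbf{P}}=\mathbf{P}$. If anything, you are slightly more careful than the paper in flagging that the sign-flipping convention must be rotation-compatible, a point the paper's own proof glosses over.
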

\begin{proof}

Since $\overline{v}_i=\mathbf{R}v_i$, we have that $\overline{v}_i-\overline{v}_j=\mathbf{R}(v_i-v_j)$.
This implies $\overline{\mathbf{C}}_i=\mathbf{R}\mathbf{C}_i$ and $\overline{\mathbf{D}_i}=\mathbf{D}_i$ (since rotation preserves distances), which implies
$$
\overline{\mathbf{B}}_i=\mathbf{R}\mathbf{B}_i.
$$
Therefore, $\overline{\mathbf{B}}_i=\mathbf{R}\mathbf{U}_i\boldsymbol{\Sigma}_i\mathbf{V}_i^\top$ is a singular value decomposition of $\overline{\mathbf{B}}_i$ and so we have $\overline{\mathbf{U}}_i=\mathbf{R}\mathbf{U}_i$. This implies that 
$$
\overline{\mathcal{O}}_{i,j}=\overline{\mathbf{U}}_i\overline{\mathbf{U}}_j^\top=(\mathbf{R}\mathbf{U}_i)(\mathbf{R}\mathbf{U}_j)^\top=\mathbf{R}\mathbf{U}_i\mathbf{U}_j^\top\mathbf{R}^\top=\mathbf{R}\mathcal{O}_{i,j}\mathbf{R}^\top,
$$
which proves the first claim. The second claim follows immediately, recalling that we assume that the edge-connectivity is unchanged by the rotation and so $\overline{\mathbf{P}}=\mathbf{P}$.

\end{proof}

We  now prove Lemma \ref{lem: equivariant powers}.

\begin{proof}

In the case where $m=0$, the result simply states that $\overline{\mathbf{w}}=\mathbf{R}\cdot \mathbf{w}$, which is true by the assumption that rotating the system rotates the vector-valued node features (vertex by vertex). 

For $m\geq 1$, we use induction. In the base case, $m=1$, we  use the definition of block-matrix multiplication, as well as Lemma \ref{lem: change of coordinate matrix} to write 
\begin{align*}
(\overline{\mathbf{Q}}\overline{\mathbf{w}})[i]
&=(\overline{\mathbf{Q}}(\mathbf{R}\cdot\mathbf{w}))[i]\\&=\sum_{j=1}^n \overline{\mathbf{Q}}[i,j](\mathbf{R}\cdot\mathbf{w})[j]\\
&=\sum_{j=1}^n \mathbf{P}[i,j]\mathbf{R}\mathcal{O}_{i,j}\mathbf{R}^\top\mathbf{R}\mathbf{w}[j]\\
&=\mathbf{R}\sum_{j=1}^n \mathbf{P}[i,j]\mathcal{O}_{i,j}\mathbf{w}[j]\\
&=\mathbf{R}(\mathbf{Q}\mathbf{w}[i]).
\end{align*}
Since $i$ was arbitrary, this implies $\overline{\mathbf{Q}}\overline{\mathbf{w}}=\mathbf{R}\cdot(\mathbf{Q}\mathbf{w})$ and establishes the base case.

We now assume the result holds for some $m\geq 1$. We let $\mathbf{y}=\mathbf{Q}^m\mathbf{w}$ and let $\overline{\mathbf{y}}=\overline{\mathbf{Q}}^m\overline{\mathbf{w}}$. Observe that $\overline{\mathbf{y}}=\mathbf{R}\cdot(\mathbf{Q}^m\mathbf{w})=\mathbf{R}\cdot\mathbf{y}$ by the inductive hypothesis. Thus,
\begin{align}
\overline{\mathbf{Q}}^{m+1}\overline{\mathbf{w}}&=\overline{\mathbf{Q}}\overline{\mathbf{y}}\nonumber\\
&=\overline{\mathbf{Q}}(\mathbf{R}\cdot\mathbf{y})\nonumber\\
&=\mathbf{R}\cdot (\mathbf{Q}\mathbf{y})\label{eqn: use the inductive hypothesis again}\\
&=\mathbf{R}\cdot (\mathbf{Q}\mathbf{Q}^m\mathbf{w})\nonumber\\
&=\mathbf{R}\cdot \mathbf{Q}^{m+1}\mathbf{w}\nonumber,
\end{align}
where in \eqref{eqn: use the inductive hypothesis again} we use the inductive hypothesis with $\mathbf{y}$ in place of $\mathbf{w}$.

\end{proof}

\section{Removing the sign ambiguity}\label{sec: sign ambiguity}

The definition of diffusion wavelets relies on computing the SVD of the matrices $\mathbf{B}_i$. However, as noted in Section \ref{sec: vector scattering}, the SVD is only unique up to sign flips. That is, one may obtain a different SVD $\mathbf{B}'_i=\mathbf{U}'_i\boldsymbol{\Sigma}_i(\mathbf{V}')_i^\top$ by replacing both $\mathbf{u}_{i,k}$ and $\mathbf{v}_{i,k}$ with $\mathbf{u}'_{i,k}=-\mathbf{u}_{i,k}$ and $\mathbf{v}'_{i,k}=-\mathbf{v}_{i,k}$ 
for any fixed $k$. 
Therefore, we  utilize the  sign-flipping trick described below in order to ensure that the  $\mathcal{O}_{i,j}$ are well defined (and do not suffer from any sign ambiguity).

We first observe that the $k,\ell$-th entry of $\mathcal{O}_{i,j}$ is given by
$$
\mathcal{O}_{i,j}[k,l] = \sum_{m=1}^d\mathbf{U}_i[k,m]\mathbf{U}^\top_j[m,\ell]=\sum_{m=1}^d\mathbf{U}_i[k,m]\mathbf{U}_j[\ell,m]=\langle\mathbf{u}_{i,k},\mathbf{u}_{j,\ell}\rangle.
$$
Therefore, if we are able to ensure that $\langle\mathbf{u}_{i,k},\mathbf{u}_{j,\ell}\rangle$ is always non-negative, then the definition of $\mathcal{O}_{i,j}$ does not suffer from the sign ambiguity.

Accordingly, when computing $\mathcal{O}_{i,j}$, we check whether or not each $\langle\mathbf{u}_{i,k},\mathbf{u}_{j,\ell}\rangle$ is positive or negative. If $\langle\mathbf{u}_{i,k},\mathbf{u}_{j,\ell}\rangle$ is negative, we replace $\mathbf{u}_{j,\ell}$ with $-\mathbf{u}_{j,\ell}$. We note that in this construction, the individual vectors, $\mathbf{u}_{i,k}$, still do suffer from the sign-flip ambiguity. However, if one replaces $\mathbf{u}_{i,k}$ with $-\mathbf{u}_{i,k}$, the sign flipping trick  also forces us to replace $\mathbf{u}_{j,\ell}$ with $-\mathbf{u}_{j,\ell}$. Thus, the inner products stay the same since $\langle\mathbf{u}_{i,k},\mathbf{u}_{j,\ell}\rangle=\langle-\mathbf{u}_{i,k},-\mathbf{u}_{j,\ell}\rangle$.

\section{Complexity of diffusion wavelets}\label{app:complexity_diffusion_wavelets}

We compute the scalar diffusion matrix \(\mathbf{P}\) and the vector diffusion matrix \(\mathbf{Q}\) for each graph as a one-time data preprocessing step and cache their (sparse) tensor data in memory. Since operators are independent for each graph, for large data sets, this is a parallel task, so we also batch-parallelize this step across CPU workers.

For a single graph with \(n\) nodes, \(m\) edges, vector dimension \(D\), and individual node degrees \(k_i\):
\begin{itemize}
    \item Constructing \(\mathbf{P}\) as a sparse matrix involves building a sparse adjacency matrix, computing a node degree vector, adding a sparse identity matrix, and rescaling entries. The total time complexity is \(\mathcal{O}(n+m)\), and \(\mathbf{P}\) has \(n+m\) nonzero entries.
    \item Constructing \(\mathbf{Q}\) as a sparse block-diagonal matrix requires first building a dictionary of neighbor sets (\(\mathcal{O}(m)\)), and then centering and kernel-rescaling vector node features (\(\mathcal{O}(mD)\)). Then, singular value decomposition is performed on each transformed node's vector feature matrix \(C_i\in\mathbb{R}^{d\times k_i}\), which has complexity \(\mathcal{O}(D\sum_{i=1}^n\,k_{i}^{2})\) where \(k_{i} \ge D\). Finally, computing \(O_{ij}=O_i O_j^\top\) (with sign-alignment of nodes' left singular vectors) and rescaling by \(p_{ij}\) is \(\mathcal{O}((n+m)D^3)\). The number of nonzero entries in \(\mathbf{Q}\) is \((n+m)D^2\).
\end{itemize}

These constructions of \(\mathbf{P}\) and \(\mathbf{Q}\) enable highly efficient geometric scattering layers in {\modelname}, through sparse matrix multiplication of (dense) scalar and vector feature vectors. Still, in terms of complexity of the forward pass of our model, this layer dominates for typical graphs. For either a scalar or vector track, first-order scattering has complexity $\mathcal{O}\left(S\,m\,(F+D)\right)$, where $S$ is the number of wavelets used (typically four to ten), $F$ is the number of signal channels (generally, $F=1$ for a vector track), and $D$ is the dimension ($D=1$ for scalars). To obtain second-order scattering coefficients, we recursively reapply each (sparse) diffusion operator to all first-order scattering coefficients, and then discard coefficients resulting from higher-pass wavelets reapplied to lower-pass wavelets. Accordingly, for either track, second-order scattering has complexity $\mathcal{O}(n\,F\,D\,S^2)$, where $n$ is the number of nodes. 

Our codebase and vector diffusion modules are implemented in PyTorch \citep{ansel2024pytorch}, PyTorch-Geometric \citep{fey2019fast}. As such, our method implementations exploit the sparse tensor libraries in these frameworks.
Finally, for all experiments, each model training run was done on one NVIDIA L40S 48 GB GPU with CUDA 12.4.

\section{Stability of local PCA and convergence of VDWs}\label{app:convergence_remarks}

As noted in Footnote \ref{footnote:sufficient_neighbors_for_D}, VDWs rely on local PCA to estimate a basis for the tangent space at each point. This estimate naturally improves with sufficient sampling density around a point. But, if the number of neighbors for a node is less than the vector feature dimension ($n_i < D$), then $\mathbf{U}$ will be rank-deficient, and we can’t recover a full basis for that node’s tangent space via local PCA. In such cases, our method requires adding neighbors until $n_i \geq D$. However, in low-density regions, where neighbors are few and far between, this may result in a poor estimate.

In such cases, it may be useful to  modify the construction of $\mathbf{Q}$ to down-weight the influence of these nodes. (This did not end up being necessary for the datasets used in our experiments.) Alternatively, one could include more points beyond immediate neighbors. For instance, one could use all points $v_j$ where $\|v_j-v_i\|_2\leq \delta_i$ (with $\delta_i$ being a threshold that accounts for sampling density), analogous to the CkNN graphs considered in \citet{berry2019consistent}. Additionally, one could attempt to improve stability via the use of robust tangent space estimators such as those introduced in \citet{kohli2026lego} or \citet{zhan2011robust}.

Our method, in particular its use of the vector diffusion operator $\mathbf{Q}$, is motivated in part by the manifold hypothesis, i.e., the assumption that the data points lie on or near a low-dimensional manifold. Under this assumption,  several works such as \citet{chew2022geometric,johnson2025manifold,wang2023convergence} have analyzed the convergence of graph neural networks (with scalar-valued signals) to a continuum limit as $n\rightarrow \infty$. It is natural to ask if these results extend to our setting of vector-valued signals.
As noted in Section \ref{sec:manifold_hyp}, we view powers of the vector diffusion operator $\mathbf{Q}$ as a computationally efficient proxy for the heat semigroup $e^{-t\Delta}$ associated with the connection Laplacian $\Delta$. The convergence of networks constructed from this heat semigroup was analyzed in Theorem 1 of \citet{battiloro2024tangent}. This analysis shows that VDW-GNNs are guaranteed to converge (in probability, with learnable parameters held constant) if we use a modified family of wavelets defined via the heat-semigroup, e.g., replace $\mathbf{Q}^{2^{j-1}}-\mathbf{Q}^{2^j}$ with $e^{-2^{j-1}\Delta}-e^{-2^j\Delta}$ (defined in the spectral domain). However, in our implementation, we prefer to use wavelets defined as powers of $\mathbf{Q}$ since it allows the wavelet transform to be computed via sparse matrix-vector multiplications, and thereby increases the computational efficiency of our method. It may be possible to analyze the convergence of VDW-GNNs in this setting using the techniques introduced in  \citet{singer2017spectral}. However, we leave that as an avenue for future work. 

Lastly, we note that in general, we use $k$-NN graphs when constructing VDWs. For analysis of how the choice of $k$ can affect convergence of neural networks under the manifold hypothesis, we refer the reader to \citet{wang2025generalization} or \citet{johnson2025manifold} and the references therein. However, in our implementation, our specific choices of $k$ were guided by (i) the aim of inducing sparsity in order to increase computational efficiency and scalability (we prioritize small $k$ where possible, challenging all models to work with sparse graphs), and (ii) tuning for the task: for example, in the wind vector reconstruction task (Section \ref{sec:experiments:wind}), interpolated vectors appeared to be most similar to their immediate neighbors' known vectors, making the highly local signal most important, captured best with $k = 3$.

\section{Experimental details - Ellipsoids (Section \ref{sec:experiments:ellipsoids})}\label{app:exper_detail:ellips}

\subsection{Architecture}\label{app:exper_detail:ellips:architecture}

In this section, we provide details on the \modelname \:model that we used for our ellipsoids experiments in Section \ref{sec:experiments:ellipsoids}. A graphical illustration of this model is given in Figure \ref{fig:ellipsoid model architecture} of the main text.

\paragraph{Scalar- and vector-track geometric scattering layers.}
We first separately compute the first- and second-order scattering coefficients of each input signal, using \eqref{eqn: scalar scattering first order}  and \eqref{eqn: scalar scattering second order} for the scalar-valued signals and \eqref{eqn: vec scattering first order}  and \eqref{eqn: vec scattering second order} for the vector-valued signal, and concatenate these to the zeroth-order scattering coefficients (where the zeroth-order coefficients are simply the untransformed signals). We organize the scalar-valued coefficients and vector-valued scattering coefficients into  tensors  \(\widetilde{\mathbf{X}}\) and \(\widetilde{\mathbf{W}}\) with dimensions $n\times F_{\text{scalar}} \times S$ and $n
\times D\times S$, where $S$ is the total number of zeroth-, first-, and second-order used for each signal at each node. (Note that for simplicity, we generally assume that we are given a single vector-valued signal, i.e., $F_{\text{vector}}=1$, which is the case for all of our experiments.) We note that in our experiments, we use the same value of $S$ for all vector- and scalar-valued features. However, we could readily modify our architecture to use a different number of scales for each feature.

\paragraph{Within-track mixing layers.}
Second, for each track, the scattering coefficients are passed through within-track coefficient-mixing layers. For the scalar track, we learn new combinations of the scattering scales using a two-layer MLP, resulting in a new tensor \(\widetilde{\mathbf{X}}' \in \mathbb{R}^{n \times F_{\text{scalar}} \times K}\) defined by 
\begin{equation}\label{eqn: Combine X coefficients}
\widetilde{\mathbf{X}}'[i_1,i_2,:]=\text{MLP}(\widetilde{\mathbf{X}}[i_1,i_2,:]),
\end{equation}
\textcolor{black}{(where the MLP does not depend on $i_1$ or $i_2$)}.

For the vector track, we perform a weighted summation across the signals with learnable weights and also employ a gating procedure. This yields a new tensor \(\widetilde{\mathbf{W}}' \in \mathbb{R}^{n\times d\times K}\) defined by 
\begin{equation}\label{eqn: combine W coefficients}
\widetilde{\mathbf{W}}'[i_1,i_2,i_3] = \sigma_w(\alpha_{i_3})\sum_{s=1}^{S}\theta_{i_3,\ell}\widetilde{\mathbf{W}}[i_1,i_2,s],
\end{equation}
where each $\alpha_{i_3}\in\mathbb{R}$ is a learnable gating parameter and \(\sigma_w\) is a nonlinear activation.

\paragraph{Vector invariants extraction layer.}
Third, for the vector track, we extract three invariant scattering vector features, the norm of each vector as well as mean and maximal cosine similarity between each of these vectors and  
those of its neighboring nodes. We then concatenate these invariant vector features with the scalar track hidden representations into a combined invariant hidden feature tensor \(\mathbf{T} \in \mathbb{R}^{n \times (F_{\text{scalar}} \, + \, 3) \times K}\). For each node $v_i$, we then reshape $\mathbf{T}[i,:,:]$ to obtain a flattened hidden feature vector \(\mathbf{t}_i \in \mathbb{R}^{K(F_{\text{scalar}} \, + \, 3)}\).

\paragraph{Task-specific prediction head.}
For scalar-valued target functions, we design   our network to be rotationally invariant. Therefore, for node-level tasks, we use a five-layer MLP to map each \(\mathbf{t}_i\) to the target dimension. Alternatively, for graph-level tasks with  scalar targets, we first pool the vectors \(\mathbf{t}_i\) vectors across nodes within graphs (using, e.g., summation over $i$ or max aggregations over $i$), and again use a five-layer MLP as the final readout head.

For vector-valued target functions, we seek for our network to be rotationally equivariant.  Thus, our final prediction for each node $v_i$ is given by 
\[
\widehat{\mathbf{y}}_i = 
\sum_{k=1}^K \sigma_w(\beta_{i}[k]) \, \widetilde{\mathbf{W}}'[i,:,k] \in \mathbb{R}^{D} \, ,
\]
where, as above $\sigma_w$ is a nonlinear activation and  each $\beta_{i}[k]$ is a gating parameter. Here, however, the gating parameters $\beta_{i}[k]$ are learned via a two-layer MLP that inputs the tensor $\mathbf{T}[i,:,:]$ and outputs $\boldsymbol{\beta}_i$, a vector of $K$ gating weights for each node.

\paragraph{Further details.}
We also note that  our implementation of the scattering coefficients differs slightly from the exposition in Section \ref{sec: vector scattering} for improved performance. Instead of 
dyadic wavelets of the form $\mathbf{P}^{2^{j-1}}-\mathbf{P}^{2^j}$, we instead use generalized diffusion wavelets of the form $\mathbf{P}^{{t_{j-1}}}-\mathbf{P}^{t_j}$, where $0=t_0<t_1<\ldots,<t_J$ is an increasing sequence of diffusion scales which are selected by the InfoGain procedure introduced in \citet{johnson2025infogain}. 
Additionally, we did not use a non-linearity while computing the scattering-coefficients since our network is able to learn non-linear relationships in the data via the MLP and the gating function used in \eqref{eqn: Combine X coefficients} and \eqref{eqn: combine W coefficients}.
Following \citet{bhaskar2023learning}, in order to help the geometric scattering transform   probe the graph geometry, we also used two Dirac (Kronecker) vectors  as additional scalar-valued input signals. \textcolor{black}{To place these Diracs, we first computed the centroid of the vertices in $\mathbb{R}^d$ (using the Euclidean distance). We then placed one Dirac at the vertex closest to this centroid and the other Dirac at the vertex furthest from the centroid (ties broken randomly).} 
Importantly, we note that all of our theoretical guarantees may be readily adapted to this modified version of the vector-valued geometric scattering transform.

\subsection{Cross-validation procedure}\label{app:exper_detail:ellips:cross_val}

In our experiments, we split the data into five folds (each containing 20\% of the data), and perform five-fold cross validation such that each fold is used as a test set and validation set exactly once, while the remaining three folds make up the training set in each cross-validation step.

\subsection{Hyperparameter settings}\label{app:exper_detail:ellips:hyperaparams}

\subsubsection{General}

For all models, we optimize for mean squared error (MSE) loss using PyTorch's \texttt{AdamW} optimizer, an initial learning rate of 0.001, and a train batch size of 32 (except for TFN, for which we use 16, to prevent out-of-memory errors). Models train for at least 50 burn-in epochs, before the following early stopping is enforced: if validation loss does not improve for 50 consecutive epochs (checking every five epochs), we (1) halve the learning rate and reload the best model weights achieved (by validation loss), and then (2) quit training after a maximum of two such restarts or a maximum of 500 total training epochs. The final model weights are then extracted from the epoch in which the lowest validation loss was achieved.

For experimental consistency and compatibility with our regression tasks, we standardize some modules across models as appropriate. That is, first, for the graph-level diameter estimation task, we use \texttt{sum} and \texttt{max} pooling of hidden node features wherever such pooling is needed; for {\modelname}, LEGS, GCN, GIN, and GAT, these pooled features are input into a five-layer readout MLP with hidden dimensions 128, 64, 32, and 16. Second, for all models (including ours), we use the sigmoid linear unit (SiLU, or `swish') activation function \citep{elfwing2018sigmoid} in the MLP (except for when another activation function is explicitly prescribed by the baseline model.) Third, when edge weights are used by a model, we compute these weights using a Gaussian kernel $K_{\epsilon}$ from Section \ref{sec: vector scattering}, with scale parameter $\epsilon$ set equal to the squared mean of mean neighbor distances across the final ellipsoid data set for all models.

\subsubsection{{\modelname} (and {\modelname}-non-equivariant)}

For {\modelname}, when using the InfoGain Wavelets procedure to select wavelet diffusion scales, we set $t_J = 16$, and use information cutoff quantiles of $[0.25, 0.5, 0.75]$ (see \citet{johnson2025infogain} for details). We deviated slightly from the method presented in \citet{johnson2025infogain} and measured the convergence of each feature as quantified in the $\ell^1$ norm rather than the KL divergence. For the scalar signals, we applied the $n\times n$ diffusion matrix  $\mathbf{P}$ to each scalar signal $\mathbf{x}$. For the vector signals, we applied the $nD\times nD$ vector diffusion matrix $\mathbf{Q}$ to each vector-valued signal $\mathbf{w}\in\mathbb{R}^{nd}$. This procedure produced diffusion scales $\{t_j \}_{j=0}^J =\{0,1,2,4,6,8,16\}$ for the scalar features ) and $\{0,1,2,4,6,9,16\}$ for the vector features (after taking the median across input signals as applicable). This means that that our filter bank consists seven total filters (including the low-pass filter $\boldsymbol{\Psi}^{t_J}$). In the within-track mixing layers, we set $K$ equal to $16$ and $32$ for the scalar and vector tracks, respectively, and the hidden layer dimensions of their two-layer mixing MLPs to $[64, 64]$ and $[128, 128]$. (Note the vector track mixing MLP has no biases or activations.) We use sigmoid for the gate activation $\sigma_w$.

On the graph-level diameter estimation task, we mitigate overfitting by introducing random perceptron dropout between layers of the readout MLP, with probability $0.7$. On the node-level vector target regression task, the final vector gating coefficients MLP has layer widths $[128, 128, K]$, including the output layer, and SiLU activations. As before, we apply sigmoid for the gates' activation function, $\sigma_w$.

When ablating the vector track in {\modelname} to render it non-equivariant, we (1) concatenate the vector feature coordinates to the Dirac scalar features as $D$ separate, additional scalar features, and (2) omit all vector feature operations. (For vector targets, this ablated model then uses the five-layer MLP prediction head defined previously, with an output layer width of the vector dimension $D$.)

\subsubsection{LEGS}

For LEGS, we use Dirac scalar node features, as in {\modelname}, and concatenate the vector feature coordinates as $d$ additional scalar features (analogous to {\modelname}-non-equivariant). The network employs dyadic-scale wavelets with $J=4$. Hidden representations are fed into a five-layer MLP with SiLU activations (and no dropout) to produce predictions. For graph-level predictions, we first pool across nodes using both \texttt{sum} and \texttt{max} operators. For vector targets, the prediction head outputs a vector of dimension $d$.

\subsubsection{GCN, GIN, GAT}

For GCN, GIN, and GAT, the input features consist of the vector feature coordinates encoded as $D$ scalar features. Each model uses two layers, with hidden dimensions size 128. Predictions are obtained using the same MLP head strategy as in LEGS.

\subsubsection{EGNN and TFN}
For the EGNN and TFN models, we adapted code from the ``Geometric GNN Dojo'' \citep{joshi2023expressive} under the MIT license, from their Github repository (\url{https://github.com/chaitjo/geometric-gnn-dojo}).
For these models, on the node-level equivariant regression task, we tune the number of layers separately for each task by starting from one and increasing until performance degrades rather than improves. The best-performing configurations use two layers for EGNN and four layers for TFN on the graph-level task; and seven and four layers, respectively, on the node-level task. Prediction heads are implemented as two-layer MLPs (which operate on both \texttt{sum} and \texttt{max} pooled hidden node features for graph-level predictions). 

EGNN expects a scalar node feature embedding; here, we use a single uniform feature with embedding size 128, which is equivalent to a learnable bias. TFN requires specification of the maximum irreducible representation order $\ell$, which we tune up to $\ell=2$ to prevent excessive parameter growth. We find that $\ell=2$ was best for both tasks. TFN also expects radial edge weights; to this end, we implement Gaussian kernel edge weights, one per edge, for use in message passing.

\subsection{Details on the node-level vector target}\label{app:exper_detail:ellips:node_targets}

The direction of $\mathbf{h}(v)$ at each point $v=(x,y,z)$ is chosen to be the outward normal vectors, which is given by the normalized gradient of  the functions $f(x,y,z)=x^2/a^2 + y^2/b^2 + z^2/c^2$, i.e.,
\[
\mathbf{n} = \frac{\nabla f(v)}{\lVert \nabla f(v) \rVert},\qquad \nabla f(v) = 2\Big(\tfrac{x}{a^2},\tfrac{y}{b^2},\tfrac{z}{c^2}\Big).
\]

The magnitude of $\mathbf{h}(v)$ is computed using (an approximation of) the $K=16$ nontrivial eigenfunctions of the Laplace-Beltrami operator on the underlying ellipsoid. (The first eigenfuction is considered trivial because it is constant and has eigenvalue $0$.) To compute these eigenfunctions, we sample $896$ more points per point cloud (so that there is  a total of $1024$ points on each  point cloud). We then construct a symmetric, unweighted, $k$-NN graph, $G^{\text{large}}=(V^{\text{large}},E^{\text{large}})$ from each of these enlarged point clouds (with $k=10$) and compute the symmetric-normalized graph Laplacian $\mathbf{L}^{\text{large}}_{\mathrm{sym}} \;=\; \mathbf{I} \;-\; (\mathbf{D}^{\text{large}})^{-1/2} \, \mathbf{A}^{\text{large}} \, (\mathbf{D}^{\text{large}})^{-1/2}$. We then approximate the first $16$ non-trivial eigenfuctions of the Laplace Beltrami operator by the first eigenvectors of $\mathbf{L}^{\text{large}}_{\mathrm{sym}}$. 

We then let $\mathbf{g}$ be a real-valued signal defined by $V^{\text{large}}$ by 
\[
\mathbf{g}(v) = \sum_{j=1}^{K} c_j\, \phi_{j}(v),
\]
and rescale $\mathbf{g}$ so that its entries have magnitudes in $[0,a]$, for some fixed $0<a<1$, by setting $\widetilde{\mathbf{g}}=a\frac{\mathbf{g}}{{\|\mathbf{g}\|_\infty}}$. Finally, we define the magnitude of $\mathbf{h}$ by $\|\mathbf{h}(v)\|_2=1+\widetilde{\mathbf{g}}(v).$ We note that by construction, we  have $0<1-a\leq \|\mathbf{h}(v)\|_2\leq 1+a$ for all $v$. 

When training and evaluating our network, we use only the $128$ points from the original data set. The additional points were merely introduced so that the eigenvectors of the graph Laplacian would be a good approximation of the Laplace-Beltrami operator on the underlying ellipsoid.

\subsection{Model runtimes}\label{app:exper_detail:ellips:runtimes}

The run times for each model are for the diameter prediction task and the node-level vector prediction task are displayed in Tables \ref{tab:res:runtimes:diameter_task} and \ref{tab:res:runtimes:normals_task} respectively. We note that all models were trained using one NVIDIA L40S 48 GB GPU, with one exception. Due to its high parameter count and larger memory requirements, we trained TFN using distributed data parallel (DDP) with four such GPUs. Hence, we conjecture that, if the model fit on one GPU, its the runtimes would roughly quadruple.

\begin{table}[h!]
\centering
\caption{Epoch number of best validation loss and average training epoch runtime on the diameter prediction task. TFN is marked with [4 GPUs] since it was trained on multiple GPUs (while all other models used a single GPU), as explained at the beginning of section \ref{app:exper_detail:ellips:runtimes}. }
\label{tab:res:runtimes:diameter_task}
\begin{tabular}{lrr}
\toprule
\textbf{Model} & 
\textbf{Best epoch} & 
\textbf{Sec. per epoch} \\
\midrule
{\modelname} (Ours) & $461 \pm 56$ & $0.0952 \pm 0.0008$ \\
{\modelname} (non-equivariant) (Ours) & $96 \pm 93$ & $0.0792 \pm 0.0039$ \\
LEGS & $356 \pm 155$ & $0.0946 \pm 0.0014$ \\
GCN & $335 \pm 132$ & $0.0568 \pm 0.0009$ \\
GAT & $62 \pm 31$ & $0.0662 \pm 0.0013$ \\
GIN & $130 \pm 111$ & $0.0576 \pm 0.0004$ \\
EGNN (2-layer) & $338 \pm 178$ & $0.0856 \pm 0.0009$ \\
TFN (4-layer) [4 GPUs] & $453 \pm 72$ & $2.6965 \pm 0.0044$ \\
\bottomrule
\end{tabular}
\end{table}

\begin{table}[h!]
\centering
\caption{Epoch number of best validation loss and average training epoch runtime on the node-level vector prediction task. TFN is marked with [4 GPUs] since it was trained on multiple GPUs (while all other models used a single GPU), as explained at the beginning of section \ref{app:exper_detail:ellips:runtimes}.}
\label{tab:res:runtimes:normals_task}
\begin{tabular}{lrr}
\toprule
\textbf{Model} & 
\textbf{Best epoch} & 
\textbf{Sec. per epoch} \\
\midrule
{\modelname} (Ours) & $488 \pm 4$ & $0.0945 \pm 0.0003$ \\
{\modelname} (non-equivariant) (Ours) & $355 \pm 184$ & $0.0793 \pm 0.0034$ \\
LEGS & $402 \pm 128$ & $0.0980 \pm 0.0016$ \\
GCN & $356 \pm 176$ & $0.0583 \pm 0.0007$ \\
GAT & $356 \pm 179$ & $0.0624 \pm 0.0004$ \\
GIN & $330 \pm 171$ & $0.0562 \pm 0.0004$ \\
EGNN (7-layer) & $325 \pm 77$ & $0.1552 \pm 0.0015$ \\
TFN (4-layer) [4 GPUs] & $364 \pm 123$ & $2.6912 \pm 0.0042$ \\
\bottomrule
\end{tabular}
\end{table}

\FloatBarrier

\section{Experimental details - Wind field reconstruction (Section \ref{sec:experiments:wind})}\label{app:exper_detail:wind}

\textbf{Lifting 2D wind vectors to 3D}. Given horizontal surface wind components $\mathbf{w}_{\text{2D}}=(u,v)$ (eastward, northward), for a point on the unit sphere with latitude $\phi$ and longitude $\lambda$, we compute the 3D wind vector by $\mathbf w_{\text{3D}} = u\,{\mathbf e}_{\text{east}} + v\,{\mathbf e}_{\text{north}}$ where 
${\mathbf e}_{\text{east}} =
[-\sin\lambda, \cos\lambda, 0]^\top,
$ and 
${\mathbf e}_{\text{north}} =
[-\sin\phi\cos\lambda, -\sin\phi\sin\lambda, \cos\phi]^\top
$ are basis vectors for the local tangent space.

\textbf{Architecture}. Here, our {\modelname} is a lightweight vector-only graph module that maps per-node vectors to updated vectors in each block by (1) applying vector diffusion wavelets; (2) concatenating the resulting coefficients with the ordered top-\(k\) incoming neighbor-vector and edge-weight features, and (3) feeding into a shared MLP to predict a new vector for each node.
In the wind configuration, a single block is used with a residual update on the original vector feature. We use a custom set of vector diffusion wavelets $\widetilde{\mathcal{W}}_J = \{(\mathbf{I} - \mathbf{Q}), (\mathbf{Q} - \mathbf{Q}^2), (\mathbf{Q}^2 - \mathbf{Q}^3), \mathbf{Q}^3\}$ (where the last is the low-pass wavelet).

\textbf{Training procedure}. We enforced the following early stopping procedure for all models: after a burn-in of at least 100 epochs, if validation MSE (calculated every epoch) does not improve for 100 epochs, halve the learning rate and continue training from the previous best model weights by validation MSE, and then cease training if validation MSE fails to improve for another 100 consecutive epoch period.

\textbf{Hyperparameter settings}.
Tuning the number of layers for GIN, GAT, GCN, and EGNN, we found that a single layer was optimal for these models, and two for TFN. We found a node embedding dimension of 128 worked best for EGNN, and 16 for TFN. We used a maximum irreducible representation order of $\ell=2$ for TFN. All models use a final three-layer prediction MLP head with hidden widths (128, 128) and output dimension 3, SiLU activations, and no batch normalization or dropout. For DD-TNN, we maintained hyperparameter settings as set by \citet{battiloro2024tangent} in their codebase. We trained all models with PyTorch's \texttt{AdamW} optimizer with $(\beta_1, \beta_2) = (0.9, 0.999)$, weight decay of $10^{-6}$, and an initial learning rate of 0.005.

\section{Experimental details - Multi-channel neural recordings (Section \ref{sec:experiments:macaque})}\label{app:exper_detail:macaque}

\begin{figure}[hbt!]
\begin{center}
\includegraphics[width=0.5\textwidth]{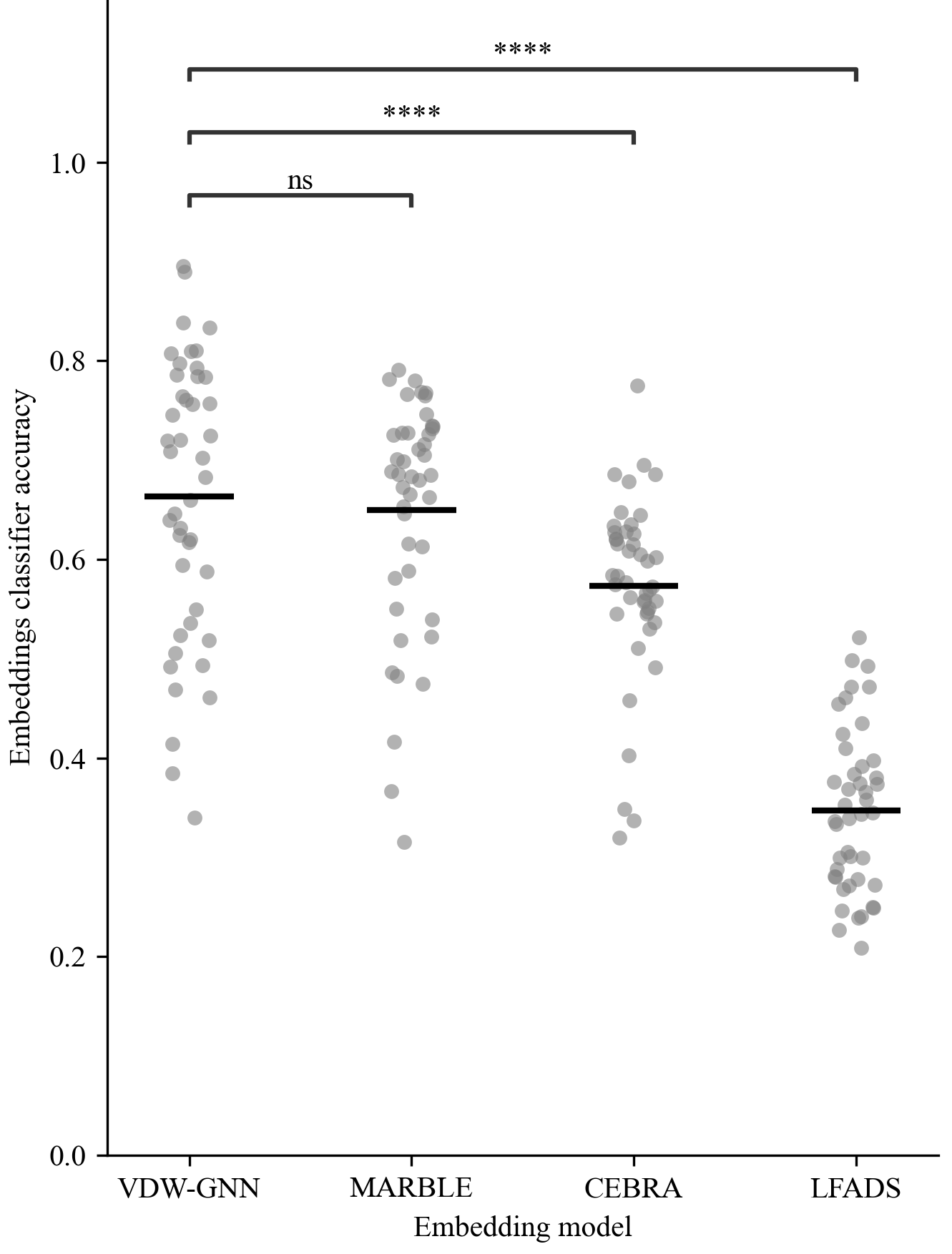}
\captionsetup{width=0.7\textwidth}
\caption{Models' distributions of mean (of mean cross-validated) classification accuracies across 44 experimental days. Brackets annotate the $p$-value results of two-sided, paired Wilcoxon tests (ns: not significant, ****: $p<0.0001$).}
\label{fig:macaque_wilcoxon}
\end{center}
\end{figure}

\textbf{Data processing}. We use the data set version available on MARBLE's data repository \citep{gosztolai2025marbledatarepo}, which is a processed subset of the original set of experiments in \citet{kaufman2016largest}, and in which the raw trial recordings have been converted into rates using a Gaussian kernel (s.d. 100 ms), then subsampled to 20-millisecond intervals.\footnote{In MARBLE's version of this reduced data set, some days have seven conditions, while others have eight; however, MARBLE used seven in their analysis, and we follow suit.} Following MARBLE's data processing protocol, for all models, we truncate the time series to after the `go' cues, leaving 35 timepoints per trial, and then smooth with a Savitzky-Golay filter (order 2, window length 9). However, unlike MARBLE, we do not then apply PCA to reduce the data dimension below 24 (which we found discards useful signal, as a severe linear reduction). For our model, we follow MARBLE's lead and reconstruct the problem as learning on a graph with vector-valued node features (note that CEBRA and LFADS do not convert the neural data to vector-valued features). Here, we follow the convention of assigning velocity vectors derived as
$\mathbf{v}_t = \mathbf{x}_{t+1} - \mathbf{x}_{t}$ to timepoint $t$ and dropping the last timepoint, leaving a final 34 nodes per trial. 

\textbf{Baselines}. MARBLE is a graph-based deep learning method that learns embeddings of neural states conceived as local flow fields on a manifold, using gradient filters and unsupervised contrastive loss. CEBRA combines nonlinear independent components analysis (ICA) and self-supervised contrastive learning to produce embeddings that jointly model neural activity data with behavioral auxiliary variables and temporal structure. LFADS is an unsupervised method built from sequential autoencoders, which, instead of contrastive loss, uses a reconstruction training objective and KL divergence regularization. Additionally, we note that while MARBLE (and our model) compute and use vector-valued neural `velocity' features, CEBRA and LFADS do not use vector-valued features, but instead ingest scalar-valued neural recording data.

\textbf{Architecture}. Our model for this task consists of three layers: (1) an equivariant, vector-valued geometric scattering transform of timepoints'/nodes' neural velocity vector features (zeroth- and first-order transforms only, using a custom set of vector diffusion wavelets $\widetilde{\mathcal{W}}_J = \{(\mathbf{I} - \mathbf{Q}), (\mathbf{Q} - \mathbf{Q}^2), (\mathbf{Q}^2 - \mathbf{Q}^3), \mathbf{Q}^3\}$ (where the last is the low-pass wavelet); (2) a four-layer MLP that projects the concatenated multi-order scattering coefficients into the embedding dimension (with structure input $\rightarrow 256, 256, 256 \rightarrow$ embedding, ReLU activations, and slight dropout $p = 0.05$); (3) a custom supervised contrastive loss function that treats nodes from the same behavioral condition as positives and nodes from different conditions as negatives, and focuses learning on informative ``hard” negatives. Concretely, for each anchor embedding, we sample a fixed number of positives per anchor and then select a top‑$k$ subset of negatives with the highest similarity (plus optional random negatives) to form a compact but challenging contrastive batch from which to compute loss as a temperature-scaled log-softmax over these similarities that encourages tight within-condition clusters and separation across conditions.

\textbf{Inductive evaluation}. A key contribution of our experimental design is to test models on inductive (as opposed to transductive) representation learning. This evaluation protocol is more stringent than the original MARBLE setup. Whereas MARBLE is trained and evaluated in a transductive manner by randomly holding out points while still permitting the test nodes to appear in the graph during embedding computation, we withhold entire trials within each day. Thus, models are trained only on training trials; test trials are never used in training nor incorporated into the training graph. This trial-wise split prevents subtle data leakage via neighborhood overlap, and directly probes generalization to unseen trial trajectories.

Across all methods, we evaluate representations via an SVM probe trained on training-trial embeddings and applied to held-out trial embeddings, ensuring that the decoding stage reflects the generalization properties of the learned representations rather than the capacity of an end-to-end classifier. In our inductive protocol, entire trials are withheld from training and never included when building the training graph for our model and for MARBLE (both of which are graph-based). This creates the challenge that test nodes do not exist in the training graph at evaluation time. Rather than reconstructing an augmented graph containing test nodes (which would revert to a semi-transductive setting), we estimate test-trial embeddings using a neighbor-based distillation procedure. 
That is, each test node’s embedding is computed as a Gaussian kernel–weighted average of the embeddings of its nearest neighbors in the training set, where neighbors are selected according to distance in neural activity space. This yields an inductive mapping from unseen nodes to the learned representation space without retraining or graph augmentation. In contrast, CEBRA and LFADS do not use graphs or vector features. These models learn embeddings directly from trial time series, and test-trial embeddings are obtained by a forward pass through their trained encoder modules on the held-out test trials. 

\textbf{Training procedure}. We employed the same learning rate adjustment and early stopping procedure as used in the wind experiments (Appendix \ref{app:exper_detail:wind}). However, the stopping rule is here governed by the classification accuracy achieved by the SVM (which is fit to the training set trials before each post-epoch evaluation), failing to improve for 32 epochs, after a burn-in of at least 32 epochs. For MARBLE, we preserved the learning rate reduction factor in their codebase as 0.1 (instead of 0.5, as we used).
Note that CEBRA trains with its own \texttt{fit()} or \texttt{partial\_fit()} methods, and we used \texttt{partial\_fit()} to enforce early stopping by checking SVM validation accuracy every 100 iterations, as one epoch.

\textbf{Hyperparameters}.
For the CkNN graph used by our model and MARBLE, we used $k=30$ neighbors and $\delta=1.0$ (note that this creates a sparser graph than MARBLE's original $\delta=1.4$). The support vector machine (SVM) classifier used to evaluate embeddings used a radial basis function kernel, with scale parameter $\gamma = (p\sigma)^{-1}$ (where $p$ is the number of features and $\sigma$ is the overall variance of the training data), and a regularization constant of $1$.

For our supervised contrastive loss function, we used a log-softmax temperature of 0.05, and randomly sampled 128 `anchor' nodes in each epoch, and further sampled one positive-sample node, eight top-$k$ neighboring negative sample nodes, and eight random (uniformly sampled) negative sample nodes per anchor.

LFADS has a large number of hyperparameters; our tuning focused on the `initial condition' and `controller' encoders' widths (both ultimately set to 256). Additionally, because we processed the data into smoothed, continuous-valued signals (not counts), we set its \texttt{reconstruction} parameter to \texttt{``gaussian"} instead of \texttt{``poisson"}.

We trained all models with PyTorch's \texttt{AdamW} optimizer with $(\beta_1, \beta_2) = (0.9, 0.999)$, weight decay of $10^{-5}$, and an initial learning rate of 0.005.

\end{document}